\documentclass{article} %
\usepackage[preprint]{colm2026_conference}

\usepackage{microtype}
\usepackage{hyperref}
\usepackage{url}
\usepackage{booktabs}
\usepackage{amsmath,amssymb,amsfonts,amsthm}
\usepackage{algorithm}
\usepackage{algorithmicx}
\usepackage{algpseudocodex}
\usepackage{graphicx}
\usepackage{float}
\usepackage{enumitem}
\usepackage{lineno}
\usepackage{wrapfig}

\definecolor{darkblue}{rgb}{0, 0, 0.5}
\hypersetup{colorlinks=true, citecolor=darkblue, linkcolor=darkblue, urlcolor=darkblue}

\newtheorem{theorem}{Theorem}[section]
\newtheorem{definition}{Definition}[section]
\newtheorem{lemma}[theorem]{Lemma}
\newtheorem{corollary}[theorem]{Corollary}
\newtheorem*{lemma*}{Lemma}
\newtheorem*{theorem*}{Theorem}

\newtheorem{proposition}[theorem]{Proposition}

\DeclareMathOperator*{\argmax}{arg\,max}

\newcommand{\method}{SignCert-PO}
\newcommand{\methodlong}{Sign-Certified Policy Optimization}

\title{Mitigating Reward Hacking in RLHF via \\ Advantage Sign Robustness}

\author{Shinnosuke Ono \\
The University of Tokyo \\
\texttt{ono@ms.k.u-tokyo.ac.jp}
\And
Johannes Ackermann \\
The University of Tokyo and RIKEN AIP
\And
Soichiro Nishimori \\
The University of Tokyo and RIKEN AIP
\And
Takashi Ishida \\
RIKEN AIP and The University of Tokyo
\And
Masashi Sugiyama \\
RIKEN AIP and The University of Tokyo
}

\begin{document}

\ifcolmsubmission
	\linenumbers
\fi

\maketitle

\begin{abstract}
	Reward models (RMs) used in reinforcement learning from human feedback (RLHF) are vulnerable to reward hacking: as the policy maximizes a learned proxy reward, true quality plateaus or degrades.
	We make the assumption that reward hacking is often caused by flipped advantage signs: instead of reducing the likelihood of a bad response, a flipped sign causes the update to increase it.
	By considering an adversarial perturbation in the RM parameter space, we can derive a \textit{certified sign-preservation radius}, which is the smallest perturbation that can flip the advantage sign during policy optimization.
	Based on this formulation, we propose \textit{\methodlong{} (\method{})}, down-weighting non-robust completions in the policy gradient update.
	Unlike prior approaches that require multiple RMs or access to the RM training data, \method{} is lightweight and operates purely at the policy optimization stage using only the RM parameters and on-policy completions.
	On TL;DR summarization and AlpacaFarm benchmarks, \method{} consistently achieves a better win rate than baselines and reduces reward hacking.
\end{abstract}
\section{Introduction}

Reinforcement learning from human feedback (RLHF) has become a standard paradigm for aligning large language models (LLMs) with human intent~\citep{christiano_deep_2017, stiennon_learning_2020, ouyang_training_2022}. In the typical RLHF pipeline, a reward model (RM) is trained on pairwise human preference data and then used as a proxy for the true, unobservable preference signal during policy optimization.
Because the RM is inherently an imperfect proxy, optimizing the policy against it often causes the policy to exploit local inaccuracies in the reward landscape. This phenomenon is known as reward hacking~\citep{gao_scaling_2023}. As optimization progresses, proxy reward continues to rise while true quality stagnates or degrades. This is especially acute when the proxy RM is trained with limited preference data~\citep{gao_scaling_2023} or when the policy drifts from the RM's training distribution during policy optimization by reinforcement learning (RL)~\citep{ackermann_off-policy_2025}.

\begin{figure}[t]
	\centering
	\includegraphics[width=\textwidth]{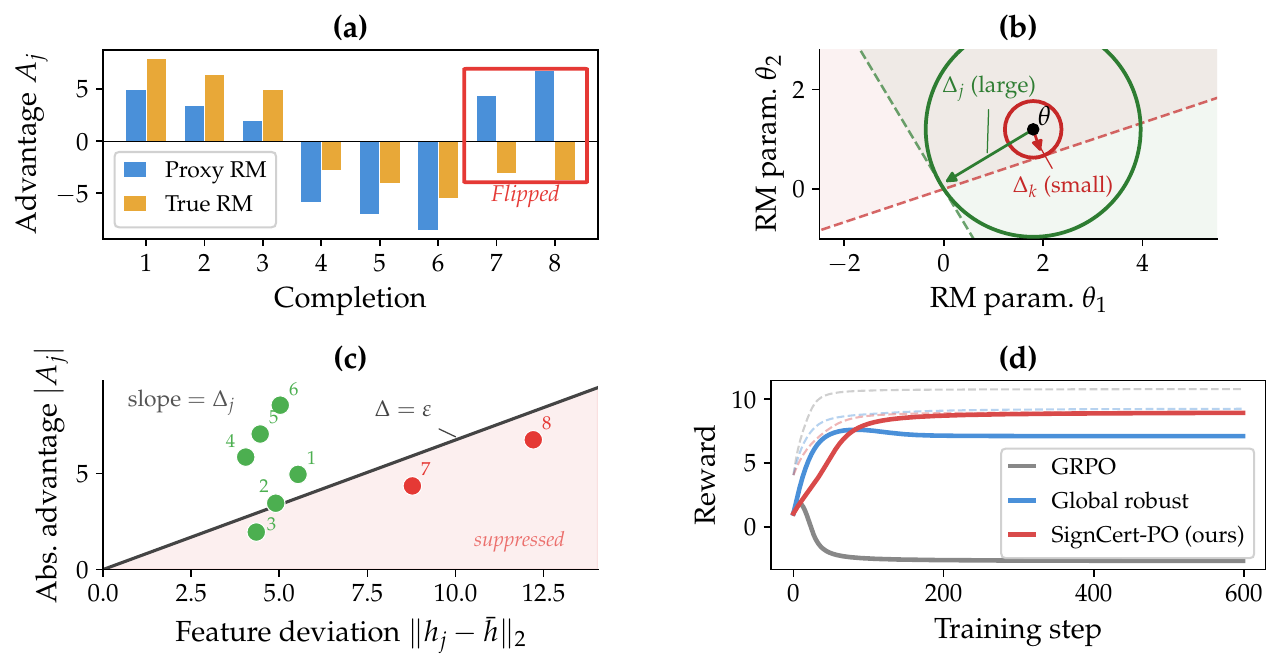}%
	\caption{\textbf{We argue that the reliability of the proxy RM's estimates differs by completion. Certified sign-preservation radius $\Delta_j$ provides this reliability measure.}
		\textbf{(a)}~Proxy and true advantages. Completions 7 and 8 have opposite signs, showing the proxy RM is unreliable there.
		\textbf{(b)}~$\Delta_j$ is the smallest perturbation of the RM parameters that flips a completion's advantage sign. Dashed lines are decision boundaries for the $j$- and $k$-th completions.
		\textbf{(c)}~Per-completion $\Delta_j.$ Completions 7 and 8 exhibit low $\Delta$, confirming $\Delta_j$ identifies unreliable completions. \method{} suppresses completions below $\Delta=\epsilon$.
	\textbf{(d)}~True (solid) and proxy (dashed) reward during RL. Using $\Delta_j$ as a re-weighting mechanism, \method{} prevents reward hacking and further improves the true reward. See Appendix~\ref{app:toy-bandit} for details.}
	\label{fig:overview}
\end{figure}

Prior approaches to mitigating reward hacking can be organized along an axis from \emph{ante-hoc} to \emph{post-hoc}~\citep{retzlaff_post-hoc_2024}. Ante-hoc methods incorporate uncertainty quantification into the reward modeling pipeline. For instance, ensemble-based methods~\citep{coste_reward_2023, eisenstein_helping_2024} train multiple RMs and optimize conservatively with respect to their disagreement, while distributional or uncertainty-aware RMs~\citep{xu_uncertainty-aware_2024} modify the RM architecture or training objective. These approaches can directly quantify RM uncertainty, but require either multiple RM training and inference runs or changes to the RM pipeline, limiting their practical applicability. In contrast, in the post-hoc approach, the RM is left unchanged and robustness is introduced only at the policy optimization stage. AdvPO~\citep{zhang_mitigating_2024}, the closest prior work to ours, is only partially post-hoc: it derives single-RM uncertainty estimates from last-layer embeddings without retraining, but still requires access to the entire RM training dataset to calibrate its uncertainty model. Our approach is \emph{entirely post-hoc}: it operates solely with the current RM parameters and on-policy completions, and thus we do not need to train new RMs nor retain the dataset used for RM training. We discuss additional related work in Appendix~\ref{app:additional-related-work}.

We argue that what matters during policy optimization is whether the RM correctly predicts the \textit{sign} of each completion's advantage, since that sign determines whether the policy increases or decreases the completion's probability. Crucially, the reliability of this prediction varies per completion, and a naive robustness penalty fails to capture this variability.
We therefore ask: for a given completion, how much would the RM need to change before that advantage sign flips? Based on randomized smoothing~\citep{cohen_certified_2019}, we formalize this quantity as a \textit{certified sign-preservation radius} and down-weight completions whose advantage signs are easy to flip (Figure~\ref{fig:overview}). 
We first show that directly considering perturbations of all RM parameters is computationally challenging, as it requires per-completion gradients.
This quickly causes us to run out of memory, even for relatively small RMs.
Instead, we propose to approximate this solution by only considering perturbations of the RM head, yielding a lightweight re-weighting of the standard policy gradient update. The resulting method, \textit{\methodlong{} (\method{})}, prevents policy updates from being dominated by directions whose update signs are locally fragile under RM perturbations.

We evaluate \method{} on TL;DR summarization~\citep{stiennon_learning_2020} and AlpacaFarm~\citep{dubois_alpacafarm_2023} with Pythia~\citep{biderman_pythia_2023} and Qwen2.5~\citep{qwen_qwen25_2025} models. Using the gold-model evaluation setup~\citep{gao_scaling_2023}, we compare against Dr.GRPO~\citep{liu_understanding_2025}, uncertainty-weighted optimization (UWO) with RM ensembles~\citep{coste_reward_2023}, behavior-supported policy optimization (BSPO)~\citep{dai_mitigating_2024}, and AdvPO~\citep{zhang_mitigating_2024}. \method{} achieves the highest gold-RM win rate in most settings, with improved RM accuracy during policy optimization.

\section{Background}

We denote the set of prompts by $\mathcal{X}$ and the set of responses by $\mathcal{Y}.$ We assume a prompt distribution $P$ over $\mathcal{X}$, where $P(x)$ denotes the probability mass of a prompt $x\in\mathcal{X}.$ An autoregressive language model $\pi_\phi$ with parameters $\phi$ is then conditional probability $\pi_\phi(y\vert x)=\Pi_t\pi_\phi(y_t\vert x,y_{<t}),$ where $x\in\mathcal{X}$ is a prompt, $y\in\mathcal{Y}$ is the response, $y_t$ is the $t$-th response token and $y_{<t}$ are the response tokens before $y_t$. Following the RL literature~\citep{sutton_reinforcement_2018}, we sometimes use the word \textit{policy,} referring to the language model $\pi_\phi$. We denote the $\ell_2$ norm by $\Vert\cdot\Vert_2$.

\subsection{Reinforcement learning from human feedback}

RLHF typically consists of three key steps: supervised fine-tuning (SFT), reward modeling, and reinforcement learning~\citep{stiennon_learning_2020}. We denote the SFT model as $\pi_\text{SFT}.$

\paragraph{Reward modeling.}
Suppose the true reward function $r^*: \mathcal{X}\times\mathcal{Y}\rightarrow\mathbb{R}$ that mirrors human preference exists. Since $r^*$ is generally unknown, an RM is trained as a proxy with the following procedure: given a prompt $x\in\mathcal{X},$ two responses $y_0, y_1\in\mathcal{Y}$ are sampled from $\pi_\text{SFT}(y\vert x)$. A human annotator then selects their preferred response, yielding a dataset of triples $(x, y_\mathrm{w}, y_\mathrm{l})$ where $y_\mathrm{w}$ (winning) is preferred over $y_\mathrm{l}$ (losing). A scalar RM $r_\theta: \mathcal{X}\times\mathcal{Y}\rightarrow\mathbb{R}$ is trained under the Bradley-Terry (BT) model~\citep{bradley_rank_1952} by minimizing the cross-entropy loss:
\begin{equation}
	\mathcal{L}_\mathrm{BT}(\theta) = -\mathbb{E}_{(x, y_\mathrm{w}, y_\mathrm{l})}\left[\log \sigma\bigl(r_\theta(x,y_\mathrm{w}) - r_\theta(x,y_\mathrm{l})\bigr)\right],
	\label{eq:bt-loss}
\end{equation}
where $\sigma: \mathbb{R}\rightarrow\mathbb{R}$ is the sigmoid function: $\sigma(z) := 1/(1+\exp(-z)).$

\paragraph{Reinforcement learning.}
The policy $\pi_\phi$ is optimized to maximize the Kullback-Leibler (KL)-regularized expected reward~\citep{ziegler_fine-tuning_2020}:
\begin{equation}
	J(\phi, \theta) = \mathbb{E}_{x\sim P(x), y\sim\pi_\phi}\bigl[r_\theta(x, y) - \beta D_\mathrm{KL}(\pi_\phi(\cdot|x) \| \pi_\text{ref}(\cdot | x))\bigr],
	\label{eq:rlhf-objective}
\end{equation}
where $D_\mathrm{KL}(p\Vert q)$ is the KL divergence of probability densities $p$ from $q,$ and $\beta \ge 0$ controls the strength of KL regularization against a reference policy $\pi_\text{ref}$. This regularization discourages the policy from diverging too far from the RM's training distribution~\citep{gao_scaling_2023}. We denote this regularized reward by $r'_\theta(x,y):=r_\theta(x,y)-\beta D_\mathrm{KL}(\pi_\phi(\cdot\vert x)\Vert\pi_\mathrm{ref}(\cdot\vert x)).$ Policy optimization proceeds via the policy gradient (PG)~\citep{sutton_policy_1999}:
\begin{equation}
	\nabla_\phi J(\phi, \theta) = \mathbb{E}_{x \sim P, y \sim \pi_\phi(\cdot|x)} [A^{\pi_\phi}_\theta(x,y) \nabla_\phi \log \pi_\phi(y|x)],
	\label{eq:policy-gradient}
\end{equation}
where $A_\theta^{\pi_\phi}(x,y) := r'_\theta(x,y) - V_\theta^{\pi_\phi}(x)$ is the advantage function with value function $V_\theta^{\pi_\phi}(x) := \mathbb{E}_{y\sim\pi_\phi}[r'_\theta(x,y)].$ Different PG-based algorithms for RLHF differ mainly in their update rules and advantage estimation. In this work, we adopt Dr.GRPO~\citep{liu_understanding_2025}\label{sec:grpo}, a widely-used variant of GRPO~\citep{shao_deepseekmath_2024}. Given a prompt $x\in\mathcal{X}$, we sample $K$ completions $y^{(1)}, \ldots, y^{(K)}\in\mathcal{Y}$ from the current policy $\pi_\phi(\cdot|x).$ Each receives a reward $r_\theta(x, y^{(j)})$ for $j\in[K]:=\{1,\ldots,K\}.$ Dr.GRPO then provides the following advantage estimation:
\begin{equation}
	A_j(\theta) := r_\theta(x,y^{(j)}) - \frac{1}{K}\sum_{k=1}^K r_\theta(x,y^{(k)}).
	\label{eq:drgrpo-advantage}
\end{equation}

Since the RM $r_\theta$ is only a proxy for true human preferences, the policy can exploit regions where it is inaccurate. As optimization progresses, proxy reward continues to rise while true quality degrades, especially when the proxy RM is trained with limited preference data. This phenomenon is known as \textit{reward hacking}~\citep{gao_scaling_2023}.

\subsection{Randomized smoothing}\label{sec:rs-background}

Reward modeling reduces to a binary classification problem on reward margins~\citep{tang_generalized_2024, nishimori2025symmetriclossesrobustpolicy}. Randomized smoothing (RS)~\citep{cohen_certified_2019} is a framework for certifying the robustness of a classifier's prediction against perturbations. Consider a classifier $f: \mathbb{R}^d \to [C]$ over $C$ classes. Given a noise scale $\sigma > 0$ for the Gaussian distribution $\mathcal{N}$, RS creates the \emph{smoothed classifier}
\begin{equation*}
	g(x):=\argmax_{c\in[C]} \Pr\big(f(x+\xi)=c\big),\quad \text{where} \ \ \xi \sim \mathcal{N}(0, \sigma^2 I_d),
\end{equation*}
which returns the class most likely to be predicted under random perturbations of the input. \citet{cohen_certified_2019} showed that if one predicted class is sufficiently more likely than the others under these perturbations, then the prediction of the smoothed classifier is provably consistent throughout an $\ell_2$ ball around $x.$ The radius of this ball is called the \textit{certified radius.} That is, the certified radius is the maximum $\ell_2$ perturbation to the input under which the smoothed classifier's prediction is guaranteed to remain unchanged.

\section{Robust policy gradient method}\label{sec:method}

We argue that when the RM $r_\theta$ assigns incorrect advantage signs, the policy is not properly updated and thus reward hacking can occur. Guided by the notion of the certified radius~\citep{cohen_certified_2019}, we formalize this by introducing the \textit{certified sign-preservation radius}, the smallest perturbation of the RM parameters that flips a completion's advantage sign (\S\ref{sec:main-results}). This leads to our proposed method of re-weighting the policy gradient updates, using the radius as a reliability measure (\S\ref{sec:setup}). Finally, we compare our method of using per-completion perturbations to the standard global perturbation setup, which we show suffers uniform pessimism unlike our method. We focus on the advantage estimate used in Dr.GRPO \eqref{eq:drgrpo-advantage} throughout the rest of the paper, but we note that other PG methods can be used instead, as shown in Appendix~\ref{app:extension-pg}.

\subsection{Certified sign-preservation radius}\label{sec:main-results}

The PG~\eqref{eq:policy-gradient} is a weighted sum of per-completion directions:
\begin{equation}
	\nabla_\phi J(\phi, \theta)
	= \mathbb{E}_{x\sim P, \{y^{(1)},\dots,y^{(K)}\} \sim \pi_\phi}
	\!\left[
		\sum_{j=1}^{K} A_j(\theta) \, \nabla_\phi \log \pi_\phi(y^{(j)} | x)
		\right].
	\label{eq:grpo-pg}
\end{equation}
Each term pushes the policy to increase ($A_j(\theta) > 0$) or decrease ($A_j(\theta) < 0$) the probability of the $j$-th completion. Consider a single gradient step $\phi'=\phi + \eta \nabla_\phi J(\phi, \theta)$ with step size $\eta > 0$. Then, a first-order expansion gives
\begin{equation*}
	\log \pi_{\phi'}(y^{(j)} | x) - \log \pi_\phi(y^{(j)} | x)
	\approx \eta A_j(\theta) \| \nabla_\phi \log \pi_\phi(y^{(j)} | x) \|_2^2.
\end{equation*}
Since $\eta > 0$ and $\| \nabla_\phi \log \pi_\phi \|_2^2 \geq 0$, the sign of $A_j(\theta)$ alone determines the update direction, when considering the $j$-th term in isolation. Crucially, the reliability of this sign prediction varies per completion: since the RM is only a proxy, some predictions in a group may not agree with the true reward, while others do agree (Figure~\ref{fig:overview}a).

To quantify the sensitivity for a given completion, we follow the idea of RS, which certifies robustness against input perturbations.
As the input space of the RM is discrete, we cannot directly utilize input perturbations.
Instead, we model epistemic uncertainty about the RM as perturbations in the parameter space: $\mathcal{U}_\tau^\theta:=\{\theta':\|\theta'-\theta\|_2\le\tau\}$~\citep{ben-tal_robust_2009}.

\begin{definition}[Certified sign-preservation radius]\label{def:cert-radius}
	Under an uncertainty set $\mathcal{U}_\tau^\theta$, the \emph{certified sign-preservation radius} for the $j$-th completion is
	\begin{equation}
		\Delta_j := \sup\bigl\{\tau \geq 0 :
		\mathrm{sign}(A_j(\theta')) = \mathrm{sign}(A_j(\theta))\
		\forall \theta' \in \mathcal{U}_\tau^\theta\bigr\}.
		\label{eq:cert-radius-def}
	\end{equation}
\end{definition}

This definition follows the certified-radius framework of \citet{cohen_certified_2019}, where the advantage sign is viewed as a binary classification decision in the RM parameter space (see Appendix~\ref{app:randomized-smoothing} for the formal connection). As such, \eqref{eq:cert-radius-def} has a natural interpretation as a sensitivity measure. A completion with large $\Delta_j$ has a \textit{robust advantage sign}: the RM would need to change substantially before the sign flips, so the corresponding gradient term in~\eqref{eq:grpo-pg} can be trusted. On the other hand, for a completion with small $\Delta_j$, a small perturbation to the RM could reverse whether the policy reinforces or suppresses it. The certified sign-preservation radius thus provides a per-sample confidence measure: for each completion individually, it tells us how much the update direction depends on the exact choice of the proxy RM parameters $\theta$.

For a general differentiable RM $r_\theta$ with full-parameter uncertainty $\mathcal{U}_\tau^\theta$, a first-order Taylor expansion of the advantage around $\theta$ gives $A_j(\theta') \approx A_j(\theta) + \nabla_\theta A_j(\theta)^\mathsf{T}(\theta' - \theta)$. The certified sign-preservation radius is then
$\Delta_j(\theta)\gtrsim\frac{|A_j(\theta)|}
	{\|\nabla_\theta A_j(\theta)\|_2}.$
Unfortunately, computing $\|\nabla_\theta A_j(\theta)\|_2$ requires the per-completion gradient norm of the RM, which our preliminary experiments found to be computationally infeasible for larger RMs.
We thus propose to use a simpler perturbation model, for which we can efficiently calculate the certified sign-preservation radius, as we will show below.

\subsection{\methodlong{} (\method{})}\label{sec:setup}

We consider the RM architecture $r_\theta(x,y) = w^\mathsf{T} h_\psi(x,y) + b$, consisting of a feature extractor $h_\psi(x,y)$ with parameters $\psi$ and a linear head with parameters $(w,b)$, as commonly used in practice~\citep{stiennon_learning_2020, ouyang_training_2022}.
Instead of perturbing all parameters $(\psi,w,b)$, we only perturb the linear head, and thus use the uncertainty set\footnote{We discuss other possible choices of perturbation models in Appendix~\ref{sec:other-models}.}
\begin{equation}
	\mathcal{U}_\epsilon^w := \{w' \in \mathbb{R}^d : \|w' - w\|_2 \le \epsilon\}.
	\label{eq:uncertainty-head}
\end{equation}

In what follows, $A_j(w)$ denotes $A_j(\theta),$ stressing perturbations for the head parameters $w.$

Under the linear head model, the certified sign-preservation radius~\eqref{eq:cert-radius-def} admits a closed form that is easy to compute during policy optimization without additional information:
\begin{theorem}[Certified radius]\label{thm:robustness}
	Under the linear head model and the uncertainty set $\mathcal{U}_\epsilon^w$ \eqref{eq:uncertainty-head}, the certified sign-preservation radius is
	\begin{equation}
		\Delta_j = \frac{|A_j(w)|}{\|h_\psi(x,y^{(j)}) - \bar{h}\|_2}, \quad \text{where} \ \ \bar{h}:=\frac{1}{K}\sum_{k=1}^Kh_\psi(x,y^{(k)}).
		\label{eq:certified-radius-param}
	\end{equation}
\end{theorem}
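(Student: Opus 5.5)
The plan is to exploit the fact that, under the linear head model, the Dr.GRPO advantage \eqref{eq:drgrpo-advantage} is an \emph{affine} (in fact linear) function of the head weights $w$. The bias cancels: writing $r_{w}(x,y^{(k)}) = w^\mathsf{T} h_\psi(x,y^{(k)}) + b$ and subtracting the group mean gives
\begin{equation*}
A_j(w') = w'^\mathsf{T}\bigl(h_\psi(x,y^{(j)}) - \bar{h}\bigr) =: w'^\mathsf{T} u_j
\end{equation*}
for every $w'\in\mathbb{R}^d$. Here $u_j$ does not depend on $w'$, since $\psi$ is held fixed. The KL term in $r'$ does not depend on $w$ either, so I will treat the advantage as in \eqref{eq:drgrpo-advantage}. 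Setting $w' = w+\delta$ turns the question into the following: what is the largest $\tau$ such that $\mathrm{sign}(A_j(w) + \delta^\mathsf{T} u_j) = \mathrm{sign}(A_j(w))$ for all $\|\delta\|_2\le\tau$?

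The key step is a Cauchy--Schwarz argument in both directions.

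\textbf{Lower bound.} For $\|\delta\|_2 \le \tau < |A_j(w)|/\|u_j\|_2$ we have $|\delta^\mathsf{T} u_j| \le \tau\|u_j\|_2 < |A_j(w)|$. The perturbed advantage therefore keeps the strict sign of $A_j(w)$, so $\Delta_j \ge |A_j(w)|/\|u_j\|_2$.

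\textbf{Upper bound.} Take $\delta^\star = -\,\mathrm{sign}(A_j(w))\,\tau\, u_j/\|u_j\|_2$. This gives $A_j(w+\delta^\star) = \mathrm{sign}(A_j(w))\bigl(|A_j(w)| - \tau\|u_j\|_2\bigr)$. For any $\tau > |A_j(w)|/\|u_j\|_2$ this has the opposite sign, so the sign condition fails and $\Delta_j \le |A_j(w)|/\|u_j\|_2$. Taking the supremum in \eqref{eq:cert-radius-def} yields \eqref{eq:certified-radius-param}. Geometrically, this is just the distance from $w$ to the hyperplane $\{w' : w'^\mathsf{T} u_j = 0\}$, which matches Figure~\ref{fig:overview}(b).

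The main obstacle is not the computation but the boundary and degenerate cases, which I would handle explicitly.

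\textbf{Case $\tau$ exactly at the threshold.} The worst-case perturbation makes $A_j = 0$, so $\mathrm{sign}$ becomes $0$. The sign condition then fails there, but it holds for every smaller $\tau$. The supremum is still the stated value whether the sign of $0$ is counted as a flip or not.

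\textbf{Case $u_j = 0$.} Here $h_\psi(x,y^{(j)}) = \bar h$, so $A_j \equiv 0$ for all $w'$ and the sign never changes. Then $\Delta_j = +\infty$, consistent with the convention $|A_j|/0 = \infty$ (indeed $A_j(w)=0$ as well, so one should adopt that convention or exclude the case).

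\textbf{Case $A_j(w) = 0$ with $u_j \ne 0$.} Arbitrarily small perturbations produce both signs, so $\Delta_j = 0$, which agrees with the formula.

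I would also state explicitly that the radius in \eqref{eq:certified-radius-param} is measured in the $w$-space uncertainty set $\mathcal{U}^w$ of \eqref{eq:uncertainty-head}. Because $A_j$ is exactly linear in $w$, the first-order Taylor bound from \S\ref{sec:main-results} becomes an equality in this case.
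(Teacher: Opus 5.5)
Your proposal is correct and follows essentially the same route as the paper: use the linearity of the head to write $A_j(w+\delta) = A_j(w) + \delta^\mathsf{T}(h_j - \bar{h})$, then identify the worst-case perturbation $\delta^\star \propto -\mathrm{sign}(A_j(w))\,(h_j - \bar{h})$. Your version is more careful than the paper's in two ways. You prove both inequalities explicitly via Cauchy--Schwarz, where the paper only exhibits the worst-case perturbation. You also handle the boundary case and the degenerate cases ($A_j(w)=0$, or $h_j=\bar{h}$, where the formula reads $0/0$), which the paper leaves implicit.
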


The proof follows from the linearity of the reward head and is given in Appendix~\ref{app:proofs}.

Based on the certified radius above, we propose \textit{\method{}}, a conservative variant of the standard PG \eqref{eq:grpo-pg}. For each completion, let the adversary choose $w' \in \mathcal{U}_\epsilon^w$ to maximally corrupt that completion's gradient contribution. Concretely, the adversary minimizes the advantage for completions the policy with the nominal RM would reinforce ($A_j(w) > 0$) and maximizes it for completions the policy would suppress ($A_j(w) < 0$):
\begin{equation}
	\tilde{A}_j :=
	\begin{cases}
		\inf_{w' \in \mathcal{U}_\epsilon^w} A_j(w') & \text{if } A_j(w) > 0, \\[4pt]
		\sup_{w' \in \mathcal{U}_\epsilon^w} A_j(w') & \text{if } A_j(w) < 0.
	\end{cases}
	\label{eq:per-sample-adversary}
\end{equation}
This formulation asks: Given that the RM could be wrong within $\mathcal{U}_\epsilon^w$, what is the most conservative advantage we can still attribute \emph{to each completion}?

\begin{theorem}[Worst-case advantage under per-completion adversary]\label{lem:worst-case-advantage}
	Under the linear head model and the uncertainty set $\mathcal{U}_\epsilon^w$~\eqref{eq:uncertainty-head}, the worst-case advantage~\eqref{eq:per-sample-adversary} for the $j$-th completion is
	\begin{equation}
		\tilde{A}_j = \rho_j^* \cdot A_j(w), \quad \text{where} \quad
		\rho_j^* := 1 - \frac{\epsilon}{\Delta_j}
		= 1 - \frac{\epsilon\,\|h_\psi(x,y^{(j)}) - \bar{h}\|_2}{|A_j(w)|}.
		\label{eq:optimal-weight}
	\end{equation}
\end{theorem}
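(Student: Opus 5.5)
The plan is to exploit the fact that, under the linear head model, the Dr.GRPO advantage~\eqref{eq:drgrpo-advantage} is an affine function of $w$. Writing $u_j := h_\psi(x,y^{(j)}) - \bar{h}$, the bias $b$ cancels in the group-mean subtraction, and the KL term does not depend on $w$; it also cancels if it is constant over the group, and otherwise contributes a $w$-independent constant $c_j$. Hence $A_j(w') = w'^\mathsf{T} u_j + c_j$. Since $A_j(w)=w^\mathsf{T} u_j + c_j$, for any $w' = w + \delta$ with $\|\delta\|_2\le\epsilon$ we get
\begin{equation*}
	A_j(w') = A_j(w) + \delta^\mathsf{T} u_j .
\end{equation*}
Everything then reduces to optimizing a linear functional over an $\ell_2$ ball.

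First, by Cauchy--Schwarz, $\inf_{\|\delta\|_2\le\epsilon}\delta^\mathsf{T}u_j = -\epsilon\|u_j\|_2$ and $\sup_{\|\delta\|_2\le\epsilon}\delta^\mathsf{T}u_j = \epsilon\|u_j\|_2$. These are attained at $\delta = \mp\epsilon u_j/\|u_j\|_2$ when $u_j\neq 0$, and they are trivially $0$ when $u_j=0$.

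Second, I split on the sign of $A_j(w)$. If $A_j(w)>0$, then $\tilde A_j = A_j(w) - \epsilon\|u_j\|_2 = A_j(w)\bigl(1 - \epsilon\|u_j\|_2/|A_j(w)|\bigr)$. If $A_j(w)<0$, then $\tilde A_j = A_j(w) + \epsilon\|u_j\|_2 = A_j(w)\bigl(1 - \epsilon\|u_j\|_2/|A_j(w)|\bigr)$, because $A_j(w) = -|A_j(w)|$. Both cases give $\tilde A_j = A_j(w)\bigl(1-\epsilon\|u_j\|_2/|A_j(w)|\bigr)$.

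Finally, I invoke Theorem~\ref{thm:robustness}, which gives $\Delta_j = |A_j(w)|/\|u_j\|_2$. Substituting this identifies the factor as $\rho_j^* = 1-\epsilon/\Delta_j$, which matches the stated form in~\eqref{eq:optimal-weight}.

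The only delicate points are bookkeeping. One is confirming that the KL-regularized reward contributes no $w$-dependence, so that the affine structure is exact. The other is the degenerate case $u_j=0$. There $\Delta_j=\infty$ and $\rho_j^*=1$ under the convention $\epsilon/\infty=0$, which is consistent with $\tilde A_j = A_j(w)$. The case $A_j(w)=0$ is excluded by the statement's two branches. I expect no genuine obstacle beyond these, since the argument is just Cauchy--Schwarz combined with the earlier closed form for $\Delta_j$.
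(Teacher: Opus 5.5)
Your proposal is correct and follows essentially the same route as the paper: you use linearity of the head to write $A_j(w+\delta) = A_j(w) + \delta^\mathsf{T}(h_j - \bar{h})$, apply Cauchy--Schwarz to obtain $A_j(w) \mp \epsilon\|h_j - \bar{h}\|_2$ in the two sign cases, and then rewrite the factor via $\Delta_j$ from Theorem~\ref{thm:robustness}. Your remarks on the KL term and on the degenerate case $h_j = \bar{h}$ only make explicit what the paper leaves implicit.
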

The proof is given in Appendix~\ref{app:proofs}. \method{} thus replaces each advantage with its worst-case value:
\begin{equation}
	\widehat{\nabla}_\phi J(\phi, \theta)
	= \mathbb{E}_{x}\left[
		\sum_{j=1}^K \rho_j^* \cdot A_j(w)
		\cdot \nabla_\phi \log\pi_\phi(y^{(j)}|x)
		\right].
	\label{eq:conservative-pg}
\end{equation}
This is the standard GRPO gradient with each term re-weighted using its certified sign-preservation radius. Setting $\epsilon = 0$ recovers standard GRPO. \eqref{eq:conservative-pg} can also be derived as the PG of a certain robust optimization problem but the discussion is deferred to Appendix~\ref{app:per-sample-adversary}.

\paragraph{Practical implementation.}\label{sec:algorithm}

The coefficient $\rho_j^*$ \eqref{eq:optimal-weight} depends only on the advantage $A_j(w)$ and the feature deviation $\|h_\psi(x,y^{(j)}) - \bar{h}\|_2$. Rather than fixing $\epsilon$ globally, we set it adaptively using the empirical distribution of the ratio $\|h_\psi(x,y^{(j)}) - \bar{h}\|_2/|A_j(w)|$ within each batch. Specifically, we set $\epsilon$ to the $q_t$-th quantile of $\{\|h_\psi(x,y^{(j)}) - \bar{h}\|_2/|A_j(w)|\}_{j=1}^{B\cdot K}$ over all $B \cdot K$ completions in the current batch, where $B$ is the number of prompts and $K$ is the group size. As shown in Appendix~\ref{app:ablation}, the parameter $q_t \in [0,1]$ provides a natural interpretation of controlling the proxy RM accuracy during policy optimization. The full pseudocode is provided in Algorithm~\ref{alg:rapo} (Appendix~\ref{app:algorithm}).

\subsection{Comparison with the global robust objective}\label{sec:global-robust}

Finally, we compare our per-completion adversary~\eqref{eq:per-sample-adversary} with the standard approach of taking a single worst-case reward for all completions over an uncertainty set~\citep{ben-tal_robust_2009, zhang_mitigating_2024}, which we show that cannot distinguish reliable from unreliable completions. Consider the \textit{global} robust objective, with the shared perturbation $w'$:
\begin{equation}
	J_\mathrm{global}(\phi,\theta) = J_\mathrm{global}(\phi,\{\psi,w,b\}):=\inf_{w'\in\mathcal{U}_\epsilon^w}J(\phi,\{\psi, w', b\}).
	\label{eq:global-robust}
\end{equation}

The following lemma shows that this approach falls short (the proof is given in Appendix~\ref{app:proofs}):

\begin{lemma}[Policy gradient of the global robust objective]
	\label{lem:global-pg}
	Under the linear reward head, the PG of the worst-case objective \eqref{eq:global-robust} is
	\begin{equation}
		\nabla_\phi J_{\mathrm{global}}(\phi,\theta)
		= \mathbb{E}_x \left[ \sum_{j=1}^{K} A_j(w)  \nabla_\phi \log \pi_\phi(y^{(j)} | x) \right]
		- \epsilon  \nabla_\phi \bigl\| \mathbb{E}_{x,y}[h_\psi(x, y)] \bigr\|_2.
		\label{eq:global-pg}
	\end{equation}
\end{lemma}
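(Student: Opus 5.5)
We plan to compute the inner infimum in \eqref{eq:global-robust} in closed form and then differentiate in $\phi$ via Danskin's theorem. Under the linear head, $r_{w'}(x,y) = w'^\mathsf{T} h_\psi(x,y) + b$. The KL term does not depend on $w'$, so
\[
J(\phi,\{\psi,w',b\}) = w'^\mathsf{T} \mu(\phi) + b - \beta\,\mathbb{E}_x\bigl[D_\mathrm{KL}(\pi_\phi(\cdot|x)\Vert\pi_\mathrm{ref}(\cdot|x))\bigr],
\qquad \mu(\phi):=\mathbb{E}_{x\sim P,\,y\sim\pi_\phi}[h_\psi(x,y)].
\]
Writing $w' = w + \delta$ with $\|\delta\|_2\le\epsilon$, the objective is affine in $\delta$. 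By Cauchy--Schwarz, the infimum is attained at $\delta^* = -\epsilon\,\mu(\phi)/\|\mu(\phi)\|_2$, which gives
\[
J_\mathrm{global}(\phi,\theta) = J(\phi,\theta) - \epsilon\,\bigl\|\mathbb{E}_{x,y}[h_\psi(x,y)]\bigr\|_2 .
\]
This step is elementary: it is the same dual-norm computation that underlies Theorem~\ref{lem:worst-case-advantage}, applied once to the aggregate feature rather than separately to each completion.

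Next we differentiate this closed form. The first term is the nominal objective, so its gradient is \eqref{eq:grpo-pg}. Here we invoke the earlier derivation that the policy gradient with the Dr.GRPO group baseline equals $\mathbb{E}_x[\sum_j A_j(w)\nabla_\phi\log\pi_\phi(y^{(j)}|x)]$. The baseline subtraction is unbiased, since $\mathbb{E}_{y\sim\pi_\phi}[\nabla_\phi\log\pi_\phi]=0$. The KL term is absorbed into $r'$ as in \eqref{eq:policy-gradient}. The second term contributes $-\epsilon\nabla_\phi\|\mu(\phi)\|_2$, which is exactly the extra term in \eqref{eq:global-pg}. As a consistency check, we can note that $\nabla_\phi\|\mu\|_2 = \mathbb{E}[\hat\mu^\mathsf{T} h_\psi\,\nabla_\phi\log\pi_\phi]$ with $\hat\mu=\mu/\|\mu\|_2$. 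This is the same term Danskin's theorem produces, namely the gradient of $J$ evaluated at the minimizer $w'^* = w - \epsilon\hat\mu$.

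The main obstacle is justifying the differentiation of the infimum. The minimizer $\delta^*$ is unique and varies smoothly with $\phi$ whenever $\mu(\phi)\neq 0$, so the envelope (Danskin) theorem applies. Alternatively, we can simply differentiate the closed form directly, since $\|\cdot\|_2$ is smooth away from the origin. We will therefore state the standing assumption $\mathbb{E}_{x,y}[h_\psi(x,y)]\neq 0$. At $\mu=0$ the statement should be read with $\nabla_\phi\|\mu\|_2$ replaced by a subgradient. We will also assume enough regularity to interchange $\nabla_\phi$ with the expectations, for instance bounded features and score functions.

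Finally, we will remark on why this falls short compared with the per-completion adversary. The penalty depends only on the population-mean feature, and it is shared across all completions through a single direction $\hat\mu$. It therefore cannot assign different weights to individual completions, in contrast to the per-completion factors $\rho_j^*$ in \eqref{eq:conservative-pg}.
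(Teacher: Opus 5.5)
Your proposal is correct and follows the paper's own proof. Both arguments compute the inner infimum in closed form via Cauchy--Schwarz to get $J_{\mathrm{global}}(\phi,\theta) = J(\phi,\theta) - \epsilon\|\mathbb{E}_{x,y}[h_\psi(x,y)]\|_2$, identify the gradient of the first term with the Dr.GRPO gradient \eqref{eq:grpo-pg}, and differentiate the penalty through $\pi_\phi$. Your Danskin consistency check and the standing assumption $\mathbb{E}_{x,y}[h_\psi(x,y)]\neq 0$ add rigor the paper leaves implicit.

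One small correction: the group-mean baseline contains $r_\theta(x,y^{(j)})$ itself, so it is not exactly unbiased. Conditionally on $x$, one gets $\mathbb{E}[\sum_j A_j\nabla_\phi\log\pi_\phi(y^{(j)}|x)] = (K-1)\,\nabla_\phi\mathbb{E}_{y\sim\pi_\phi(\cdot|x)}[r_\theta(x,y)]$. This constant-factor looseness is already present in the paper's \eqref{eq:grpo-pg} and does not affect the structure of your argument.
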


Here, the correction term $\nabla_\phi \| \mathbb{E}_{x,y}[h_\psi(x,y)] \|_2$ is a single direction shared across all completions $y\sim\pi_\phi.$ It does not depend on \emph{which} completions receive reliable reward signal and which do not: a completion whose ranking is highly certain receives the same penalty as one whose ranking could be flipped by a small perturbation of the RM parameters.

This uniform pessimism is also observed beyond the linear-head case. \citet{zhang_mitigating_2024} empirically observed that their uncertainty set over the whole RM parameters $\theta$ led to overly pessimistic reward corrections. \citet{eisenstein_helping_2024} similarly found that ensemble-based penalties fail when all members agree on the same spurious features, suggesting that completion-level discrimination is necessary.

In contrast, our per-completion formulation~\eqref{eq:per-sample-adversary} and re-weighting~\eqref{eq:conservative-pg} provide this discrimination: completions with robust advantage signs receive the gradient essentially unchanged, while completions with unreliable signs are effectively removed. \method{} therefore selectively suppresses the unreliable gradient directions while preserving the reliable ones.

\section{Experiments}\label{sec:experiments}

To evaluate our proposed method in LLM alignment tasks, we adopt the gold-model setup of~\citet{gao_scaling_2023}, which allows us to investigate reward hacking by simulating the true reward $r^*$ with a larger ``gold'' RM.
The pipeline proceeds in four stages:
\begin{enumerate}
	\item \textbf{SFT.}  A base LLM is fine-tuned on reference responses to obtain the SFT policy $\pi_{\mathrm{SFT}}$.
	\item \textbf{Preference data generation.}  For each prompt $x$, two candidate responses are sampled from $\pi_{\mathrm{SFT}}$ and labeled by the gold RM to form pairwise preference data.
	\item \textbf{Proxy RM training.}  A proxy reward model is trained on the generated preference data using the Bradley--Terry objective \eqref{eq:bt-loss}. We add a randomly initialized scalar head on top of the same base LLM backbone as the SFT model $\pi_\mathrm{SFT}$. During RM training, although we derived~\eqref{eq:cert-radius-def} under the frozen-backbone assumption, we train the full RM parameters to test broader applicability of our proposed method.
	\item \textbf{Policy optimization and evaluation.}  The SFT policy is optimized against the proxy RM using one of the methods described below. The resulting policy is evaluated by the gold RM that was not used during policy optimization.
\end{enumerate}

We conduct experiments on the foundational summarization from human feedback task (TL;DR)~\citep{stiennon_learning_2020} and on the AlpacaFarm benchmark~\citep{dubois_alpacafarm_2023}. We use Pythia~\citep{biderman_pythia_2023} and Qwen2.5~\citep{qwen_qwen25_2025} series as base models. For Pythia 1B and Qwen2.5 1.5B policies, the gold RM is Skywork-Reward-Llama-3.1-8B-v0.2~\citep{liu_skywork-reward_2024}, one of the top-performing discriminative RMs on RewardBench~\citep{malik_rewardbench_2025}. For larger policy models (Pythia 2.8B and Qwen2.5 3B), we use OpenAI's GPT-4.1 Nano as the gold RM. More experimental details are given in Appendix~\ref{app:algorithm}.

\subsection{Comparison with baselines}\label{sec:main-comparison}

\begin{table}[t]
	\centering
	\begin{tabular}{lcccccc}
		\toprule
		Dataset          & \multicolumn{4}{c}{TL;DR Summarization} & \multicolumn{2}{c}{AlpacaFarm}                                                                 \\
		\cmidrule(lr){2-5} \cmidrule(lr){6-7}
		Model            & P 1B                                    & Q 1.5B                         & P 2.8B        & Q 3B          & Q 1.5B        & Q 3B          \\
		\midrule
		SFT              & $15.7 \pm 0.2$                                    & $27.3\pm0.3$                           & 25.2          & 35.6          & 37.8          & 47.9          \\
		Dr.GRPO          & $21.0 \pm 5.4$                          & $66.9 \pm 3.1$                 & 68.0          & 90.2          & 32.6          & 46.9          \\
		UWO              & $6.1\pm3.6$                             & $59.6 \pm 5.3$                 & 38.1          & 81.8          & 38.3          & 46.9          \\
		BSPO             & $40.4\pm1.2$                            & $\mathbf{78.3\pm0.7}$          & 50.0          & 63.1          & 31.3          & 43.8          \\
		AdvPO            & $47.0 \pm 2.6$                          & $58.1 \pm 1.0$                 & 69.1          & 90.0          & 37.5          & 50.0          \\
		\midrule
		\method{} (ours) & $\mathbf{60.0 \pm 2.0}$                 & $66.0 \pm 1.9$                 & \textbf{73.8} & \textbf{91.8} & \textbf{47.9} & \textbf{52.3} \\
		\bottomrule
	\end{tabular}
	\caption{Win rate (\%) vs.\ human-generated reference response after training, as judged by the gold RM. For TL;DR experiments on Pythia 1B and Qwen2.5 1.5B, we run three different random seeds and report mean $\pm$ standard deviation. The other experiments use a single seed. We abbreviate Pythia as P and Qwen2.5 as Q. Best result per column is shown in \textbf{bold}.}
	\label{tab:main-results}
\end{table}

\begin{wraptable}{r}{0.45\textwidth}
	\vspace{-2em}
	\centering
	\small
	\begin{tabular}{lcc}
		\toprule
		Method    & Pythia 1B & Qwen2.5 1.5B \\
		\midrule
		Dr.GRPO   & 7.54      & 10.67        \\
		UWO       & 7.48      & 10.76        \\
		BSPO      & 9.92      & 13.37        \\
		AdvPO     & 10.93     & 16.32        \\
		\midrule
		\method{} & 7.47      & 10.78        \\
		\bottomrule
	\end{tabular}
	\caption{Average wall-clock time per policy optimization step (seconds) on 8 GPUs over three runs on TL;DR dataset, as in Table \ref{tab:main-results}. \method{} roughly matches Dr.GRPO.}
	\label{tab:runtime}
	\vspace{-1em}
\end{wraptable}

We compare our proposed method with the SFT policy $\pi_\mathrm{SFT},$ Dr.GRPO~\citep{liu_understanding_2025}, Uncertainty Weighted Optimization (UWO)~\citep{coste_reward_2023}, Behavior-Supported Policy Optimization (BSPO)~\citep{dai_mitigating_2024}, and AdvPO~\citep{zhang_mitigating_2024}. For UWO, we use an ensemble of three RMs and smaller individual RMs to control total model capacity, following the setup of \citet{zhang_mitigating_2024}.%

Table~\ref{tab:main-results} presents the gold-RM win rate for each method across both benchmarks and model scales. On TL;DR summarization, \method{} achieves the highest win rate in most settings. 
We observe larger gains when the proxy RM is relatively small, which we believe is due to smaller proxy RMs becoming inaccurate faster (Appendix~\ref{app:rm-accuracy}) and thus needing robustness more. As the policy and proxy RM scale up, the margin narrows but \method{} remains competitive or superior. We also note that the performance of UWO on the TL;DR task with Pythia 1B does not improve even with larger proxy RMs (Table~\ref{tab:uwo-rm-size} in Appendix~\ref{app:additional-baselines}).

\paragraph{Runtime.}
Table~\ref{tab:runtime} compares the per-step wall-clock time across methods.
\method{} adds negligible overhead to Dr.GRPO: the only additional computation is evaluating the coefficients $\rho_j^*$~\eqref{eq:optimal-weight} from the RM's last-layer features, which are already extracted during reward scoring.
UWO matches Dr.GRPO in training time, but requires loading multiple ensemble members, increasing VRAM usage when using full-sized RMs.
BSPO and AdvPO incur additional training cost due to their modified optimization procedures.

\subsection{Additional experiments and analysis}\label{sec:analysis}

We further perform additional experiments to better understand our proposed method.

\paragraph{How \method{} mitigates reward hacking.}\label{sec:mechanism}

Figure~\ref{fig:kl-tradeoff} (left) 
shows that as the policy drifts from $\pi_\mathrm{SFT}$ (i.e., increasing KL divergence), the proxy RM's accuracy 
on on-policy completions degrades.
Dr.GRPO's accuracy drops toward 50\% at high KL as a result, while \method{} follows a different trajectory. 
Our proposed method keeps the policy in regions where the proxy RM maintains accuracy around 60\%, even as optimization progresses. 
This is a consequence of the re-weighting mechanism: completions whose advantage signs are easy to flip receive reduced gradient weight, preventing the policy from being pulled toward regions where the proxy RM is unreliable. 
This RM accuracy degradation can lead to reward hacking, as suggested by Figure~\ref{fig:kl-tradeoff} (right). Dr.GRPO's proxy reward (dashed) continues to rise even as its gold reward (solid) collapses. \method{} achieves a comparable or even better gold reward at a fraction of the KL budget and avoids the collapse.

\begin{figure}[t]
	\centering
	\begin{tabular}{cc}
		\includegraphics[width=0.48\textwidth,height=0.28\textwidth,keepaspectratio]{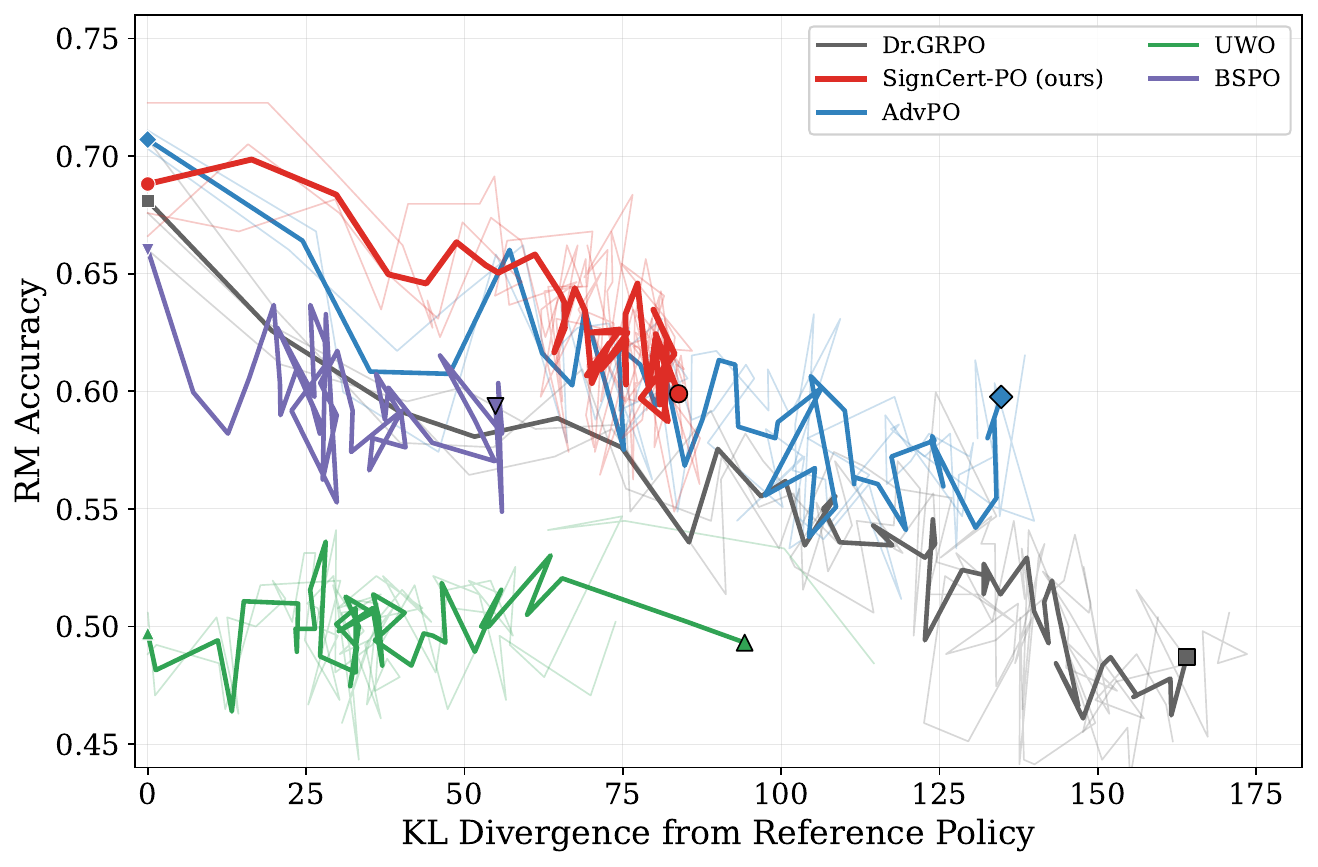} &
		\includegraphics[width=0.48\textwidth,height=0.28\textwidth,keepaspectratio]{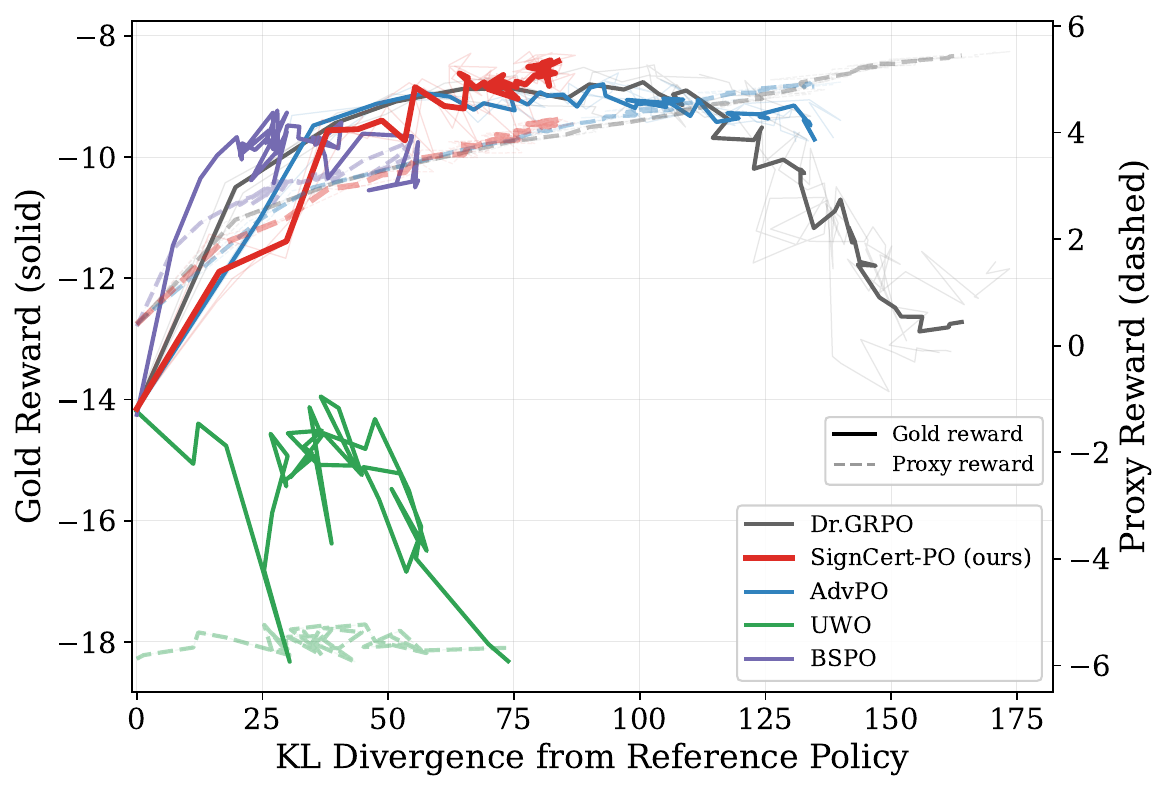}                                    \\
		(a) Pythia-1B: KL--RM accuracy                                                 & (b) Pythia-1B: KL--reward \\[1em]
	\end{tabular}
	\caption{\textbf{\method{} keeps the policy in regions where the proxy RM remains reliable, preventing reward hacking.} KL divergence trade-offs on TL;DR. Left: proxy RM accuracy vs.\ KL. \method{} maintains higher RM accuracy at every KL budget. Right: gold reward (solid) and proxy reward (dashed) vs.\ KL. Baselines exhibit reward hacking, whereas \method{} avoids this divergence. The reference policy is the SFT model $\pi_\mathrm{SFT}.$}
	\label{fig:kl-tradeoff}
\end{figure}

\paragraph{Comparison with early stopping.}
The gold reward curves in Figure~\ref{fig:kl-tradeoff}b also demonstrate the comparison with early stopping. Dr.GRPO with $\beta{=}0.01$ (the one also used for \method{}) reaches peak gold reward around KL$\approx$100, after which the reward degrades.
An ideal early-stopping mechanism that stops training at this point would recover its peak gold reward, which is still lower than the gold reward after training by \method{}. In contrast, \method{} converges to a favorable region without relying on external evaluation.

\paragraph{Empirical validation of the certified radius.}
In \S\ref{sec:method}, we assumed linear head perturbation when deriving the certified sign-preservation radius $\Delta_j$ \eqref{eq:certified-radius-param}, in order to simplify computation during policy optimization. 
We thus test whether $\Delta_j$ predicts robustness under the other perturbation classes discussed in \S\ref{sec:main-results} and show the results in Figure~\ref{fig:radius_validation}.
We show (i) whole-model perturbation, where all RM parameters are perturbed, (ii) input embedding perturbation, where noise is added before the transformer backbone, and (iii) cross-model agreement, measuring whether the proxy RM's advantage signs match those of the gold RM. In all three settings, completions with larger $\Delta_j$ consistently demonstrate higher sign-preservation or agreement rates. Furthermore, the Spearman rank correlation coefficient between $\Delta_j$ and sign preservation rate is $0.72$ for whole-model perturbation and $0.67$ for input-embedding perturbation. These results suggest that $\Delta_j$ captures sensitivity structure beyond our theoretical assumption.

\begin{figure}[t]
	\begin{minipage}[t]{0.48\textwidth}
		\centering
		\includegraphics[width=\textwidth]{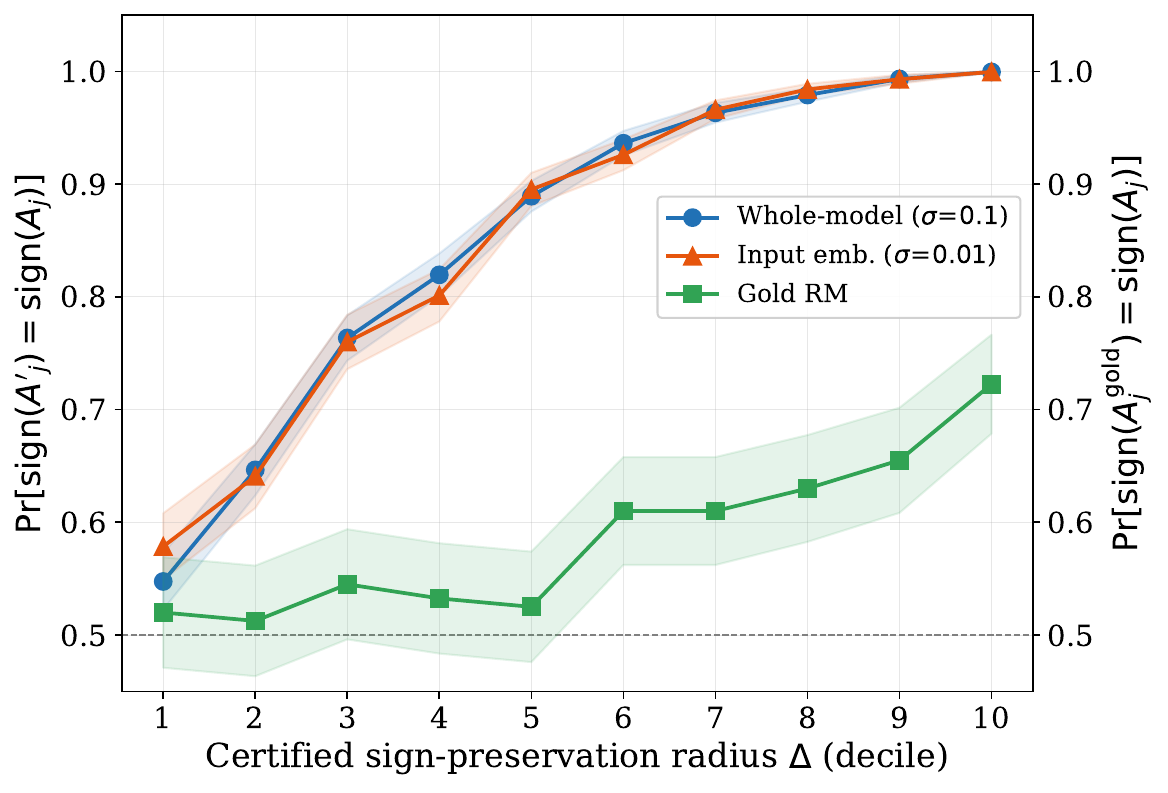}
		\caption{\textbf{$\Delta_j$ \eqref{eq:certified-radius-param} is predictive of sign robustness beyond the linear head assumption,} on the TL;DR task for Pythia 1B. Left axis: agreement with other perturbation models, where $A'_j$ is the advantage recomputed under whole-RM or input embedding perturbation. Right axis: agreement with the gold RM. See Appendix \ref{app:other-experimental-details} for details.}
		\label{fig:radius_validation}
	\end{minipage}
	\hfill
	\begin{minipage}[t]{0.48\textwidth}
		\centering
		\includegraphics[width=\textwidth]{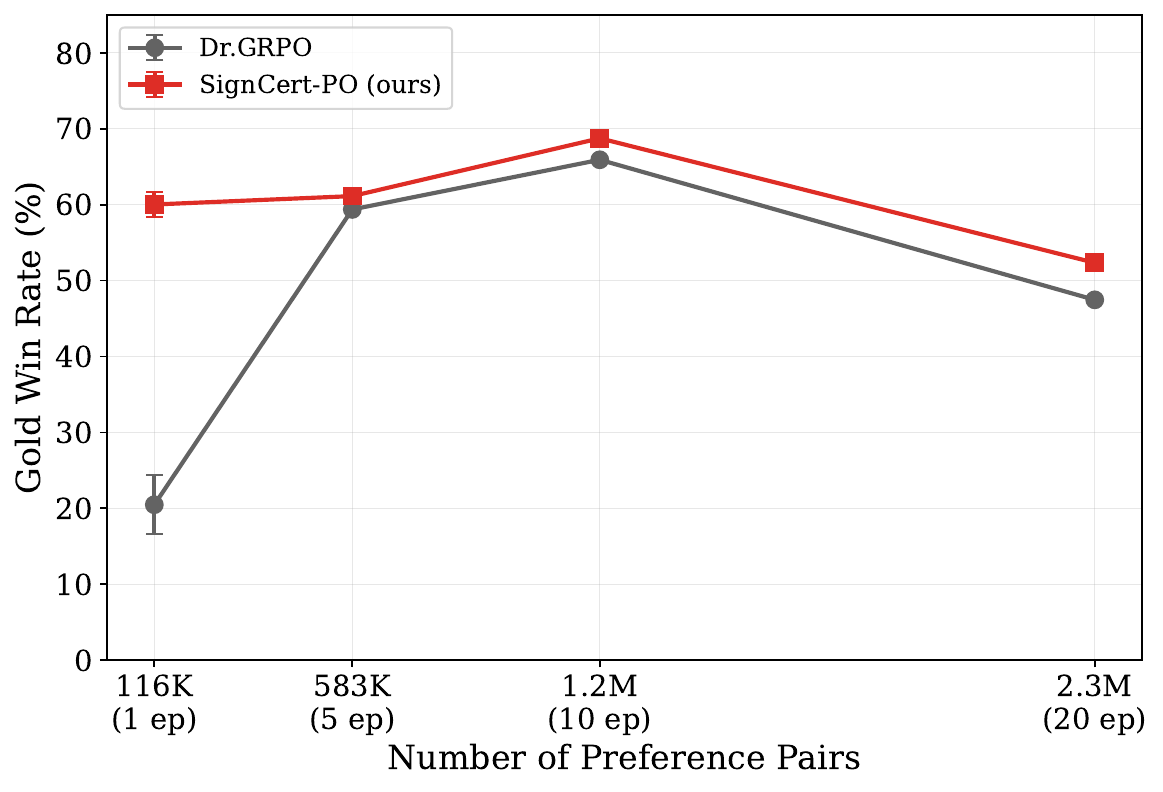}
		\caption{\textbf{\method{} provides the largest gains when preference data is limited, with the gap narrowing as more data becomes available.} Gold win rate vs.\ number of preference data epochs on TL;DR for the Pythia 1B proxy RM. We also observe overfitting of the proxy RM for 2.3M pairs.}
		\label{fig:rm_epoch_winrate}
	\end{minipage}
\end{figure}

\paragraph{Sensitivity to proxy RM quality.}
\label{sec:rm_quality}

The proxy RM's reliability depends on the amount of preference data used for its training. Figure~\ref{fig:rm_epoch_winrate} varies the number of generated preference pairs (from 116K at 1 epoch to 2.3M at 20 epochs) and compares the win rate of Dr.GRPO and \method{} on Pythia 1B. \method{} provides the largest gains when the preference data is limited. With only 116K preference pairs (i.e., one pair per prompt in the dataset), \method{} achieves significantly higher win rate compared to Dr.GRPO. This is the most practically relevant regime, since collecting large preference datasets is expensive. 
While the benefit of our method decreases for larger datasets, results show that it does not hurt performance when the RM is already reliable. 
Finally, for more than 1M pairs we observe overfitting of the proxy RM and the results for both methods degrade.

\section{Conclusion}

We presented \method{}, a lightweight method for RLHF that mitigates reward hacking by down-weighting policy gradient contributions of completions whose advantage signs are fragile under RM parameter perturbations. Our approach derives a certified sign-preservation radius that quantifies, for each completion, the smallest perturbation to the RM that can flip the advantage sign. This per-completion formulation yields a simple re-weighting mechanism of the standard PG update, requiring only the current RM head parameters and on-policy completions.  Experiments on TL;DR summarization and AlpacaFarm across Pythia and Qwen2.5 model families show that \method{} achieves the highest gold-RM win rate in most settings while adding negligible computational overhead.

Our method makes several simplifying assumptions (\S\ref{sec:setup}). Further, our per-sample adversary formulation is more conservative than a shared-adversary variant. Analyzing the tighter coupled formulation is a future direction. Finally, extending the certified sign-preservation radius to general RMs without incurring the cost of full-parameter gradient norm computation remains an open problem.

\section*{Acknowledgement}

This research was conducted using the Supermicro ARS-111GL-DNHR-LCC and FUJITSU Server PRIMERGY CX2550 M7 (Miyabi) at Joint Center for Advanced High Performance Computing (JCAHPC).

\section*{Ethics Statement}

This work proposes a new algorithm that mitigates reward hacking in RLHF, contributing to the development of more reliable and aligned language models. Our method does not involve the collection of new human preference data or human subjects. We use only publicly available datasets and models in our experiments. We do not foresee specific ethical risks arising from this work beyond those inherent to the RLHF framework.

\section*{Reproducibility statement}

All experiments are conducted on a shared compute cluster. Each training job uses 8 NVIDIA GH200 nodes, each equipped with an NVIDIA H100 GPU. Response generation during the gold-model pipeline uses vLLM for efficient batched inference. Complete hyperparameter settings for SFT, RM training and policy optimization are listed in Appendix~\ref{app:hyperparameters} (Tables~\ref{tab:hp-sft}--\ref{tab:hp-policy}). We use the publicly available dataset\footnote{\url{https://huggingface.co/datasets/vwxyzjn/summarize_from_feedback_tldr_3_filtered}} for our TL;DR experiments. For BSPO and AdvPO, we did not find the public implementation and thus we implemented them ourselves, based on the respective papers. For the 1-epoch proxy RM experiments on Pythia-1B and Qwen2.5-1.5B, we report mean $\pm$ standard deviation over 3 random seeds, corresponding to different SFT models, sampled preference datasets, proxy RMs and resulting policies. 
\bibliographystyle{colm2026_conference}

\appendix
\section{Additional background and related work}\label{app:additional-background}

We present a more comprehensive discussion of related work here.

\subsection{Policy gradient algorithms}\label{app:pg-algorithms}

\paragraph{Proximal Policy Optimization (PPO).}
PPO~\citep{schulman_proximal_2017} is a canonical PG-based method for RLHF that uses a clipped surrogate objective and jointly trains a value network for advantage estimation. Given responses sampled from an old policy $\pi_{\phi_\mathrm{old}}$, PPO maximizes the clipped surrogate objective:
\begin{equation}
	L^{\mathrm{PPO}}(\phi) = \mathbb{E}_{x \sim P, y \sim \pi_{\phi_\mathrm{old}}} \left[ \frac{1}{T} \sum_{t=1}^{T} \min\Big( \nu_t(\phi) \hat{A}_t, \mathrm{clip}\big(\nu_t(\phi), 1 - \varepsilon_\mathrm{clip}, 1 + \varepsilon_\mathrm{clip}\big) \hat{A}_t \Big) \right],
	\label{eq:ppo-objective}
\end{equation}
where $T = |y|$ is the response length, $\varepsilon_\mathrm{clip} > 0$ is the clipping hyperparameter, and
\begin{equation*}
	\nu_t(\phi) := \frac{\pi_\phi(y_t \mid x, y_{<t})}{\pi_{\phi_\mathrm{old}}(y_t \mid x, y_{<t})}
\end{equation*}
is the density ratio. The token-level advantage $\hat{A}_t$ is estimated using a learned value network $V_\omega(x, y_{<t})$ with parameters $\omega$, trained jointly with the policy:
\begin{equation}
	\hat{A}_t = r_t + \gamma V_\omega(x, y_{\le t}) - V_\omega(x, y_{<t}),
	\label{eq:ppo-advantage}
\end{equation}
where $\gamma \in [0,1]$ is the discount factor and $r_t$ is the per-token reward. In RLHF, $r_t = -\beta \log\bigl(\pi_\phi(y_t \mid x, y_{<t}) / \pi_\mathrm{ref}(y_t \mid x, y_{<t})\bigr)$ for all $t$, with the RM score $r_\theta(x, y)$ added at the final token $t = T$~\citep{ziegler_fine-tuning_2020}.

\paragraph{Group Relative Policy Optimization (GRPO).}
GRPO~\citep{shao_deepseekmath_2024} replaces the learned value network in PPO with a group-based Monte Carlo baseline. For each prompt $x\in\mathcal{X}$, a group of $K$ completions $y^{(1)}, \ldots, y^{(K)}\in\mathcal{Y}$ is sampled from the current policy $\pi_\phi(\cdot|x)$, and each receives a reward $r_j := r_\theta(x, y^{(j)})$. In the original GRPO, the advantage is the normalized reward: $\widehat{A}_j := (r_j - \mathrm{mean}(\mathbf{r})) / \mathrm{std}(\mathbf{r}),$ where $\mathbf{r}$ is the vectorized reward $(r_1,\ldots,r_K)^\mathsf{T},$ and $\mathrm{mean}(\cdot)$ and $\mathrm{std}(\cdot)$ denote the mean and standard deviation of the elements of a vector, respectively.

GRPO can be viewed as replacing PPO's learned value baseline with a within-group relative baseline computed from sampled completions. This removes the need to train a separate value network, allowing for preserving VRAM and scaling up the RLHF pipeline.

Dr.GRPO~\citep{liu_understanding_2025} is a widely-used variant that removes the standard-deviation normalization to avoid biased gradient estimation (see \S\ref{sec:grpo} in the main text).

\subsection{Additional related work}\label{app:additional-related-work}

Reward hacking arises when the policy distribution drifts outside the region where the proxy RM is reliable. Prior work addresses this through ensemble-based uncertainty, modified RM training, or adversarial robustness.

\paragraph{RM ensembles.}
Ensemble-based methods mitigate reward hacking by optimizing a policy conservatively with respect to multiple RMs. \citet{coste_reward_2023} train several proxy RMs and study conservative objectives such as worst-case and uncertainty-weighted optimization (UWO), showing that ensembles reduce reward hacking in the gold-RM setup of~\citet{gao_scaling_2023}. \citet{eisenstein_helping_2024} observe that ensembles still exploit spurious features when all members agree on them. \citet{zhai_uncertainty-penalized_2026} propose LoRA~\citep{hu_lora_2021}-based RM ensembles to reduce the cost of training multiple full RMs. These approaches require multiple RM training and inference runs.

\paragraph{Distribution shift and modified RM training.}
Reward hacking can also be viewed as a distribution shift problem: as the policy moves away from the SFT distribution, the RM's training data becomes unrepresentative. \citet{ackermann_off-policy_2025} propose off-policy corrected reward modeling, which uses importance weighting to make the RM consistent under the evolving policy distribution. \citet{dai_mitigating_2024} propose BSPO, which regularizes the RM training objective to improve robustness. \citet{xu_uncertainty-aware_2024} train distributional RMs with explicit uncertainty outputs. These methods modify the RM training pipeline, whereas \method{} operates purely at policy optimization time using any pre-trained RM.

\paragraph{Adversarial policy optimization.}
The closest work to ours is AdvPO~\citep{zhang_mitigating_2024}, which derives single-RM uncertainty estimates from last-layer embeddings and formulates a distributionally robust policy optimization problem. Two differences distinguish our approach. First, AdvPO requires constructing a matrix $M_D$ from the RM training dataset $D$ at policy optimization time; in many practical pipelines, this data is unavailable when using third-party RMs~\citep{malik_rewardbench_2025, chijiwa_portable_2025}. \method{} requires only the current head parameters and on-policy completions. Second, AdvPO constructs a confidence region for the reward function using that matrix and then optimizes the policy against a pessimistic reward within that region. In contrast, \method{} models uncertainty as perturbations of the RM parameters and asks whether those perturbations can flip the sign of the group-relative advantage for each completion. This aligns our adversarial formulation directly with GRPO's update geometry, as discussed in \S\ref{sec:global-robust}.

\section{Proofs}\label{app:proofs}

In what follows, $h_\psi(x,y)$ is referred to as $h(x,y)$ for brevity, and $h_j$ denotes $h(x,y^{(j)}).$ We also write $A_j$ instead of $A_j(w)$ when the context is clear. We also sometimes omit the outer expectation $x\sim P(x).$

\begin{theorem}[Certified radius (Theorem~\ref{thm:robustness}, restated)]
	The certified sign-preservation radius for sample $j$ under head parameter uncertainty is
	\begin{equation*}
		\Delta_j^{\mathrm{param}} = \frac{|A_j|}{\|h_j - \bar{h}\|_2}.
	\end{equation*}
\end{theorem}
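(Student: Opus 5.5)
The argument rests on one observation: under the linear head model the advantage is an affine function of the perturbed head weights. Since $r_{w'}(x,y^{(j)}) = w'^\mathsf{T} h_j + b$ and the bias $b$ cancels against the group mean, we have $A_j(w') = w'^\mathsf{T}(h_j - \bar h)$. Writing $w' = w + \delta$ with $\|\delta\|_2 \le \tau$ gives
\begin{equation*}
	A_j(w+\delta) = A_j(w) + \delta^\mathsf{T} v_j, \qquad v_j := h_j - \bar h .
\end{equation*}
The sign of $A_j$ can flip only if this affine function crosses zero inside the ball. The problem therefore reduces to computing the $\ell_2$ distance from $w$ to the hyperplane $\{w' : w'^\mathsf{T} v_j = 0\}$.

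\textbf{Step 1 (lower bound).} By Cauchy--Schwarz, $|\delta^\mathsf{T} v_j| \le \tau \|v_j\|_2$. So for every $\tau < |A_j|/\|v_j\|_2$ and every $w' \in \mathcal{U}_\tau^w$, the perturbed value $A_j(w')$ lies strictly on the same side of zero as $A_j(w)$. Hence $\mathrm{sign}(A_j(w')) = \mathrm{sign}(A_j(w))$, and taking the supremum gives $\Delta_j \ge |A_j|/\|v_j\|_2$.

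\textbf{Step 2 (upper bound).} Take the adversarial direction $\delta^* = -\tau\,\mathrm{sign}(A_j)\, v_j/\|v_j\|_2$, which attains equality in Cauchy--Schwarz. Then
\begin{equation*}
	A_j(w+\delta^*) = \mathrm{sign}(A_j)\bigl(|A_j| - \tau\|v_j\|_2\bigr).
\end{equation*}
For any $\tau > |A_j|/\|v_j\|_2$, this quantity has the opposite sign. At $\tau = |A_j|/\|v_j\|_2$ it equals exactly zero, which already changes the sign under the convention $\mathrm{sign}(0)=0$. So no $\tau$ larger than $|A_j|/\|v_j\|_2$ belongs to the set in \eqref{eq:cert-radius-def}, and combining with Step 1 the supremum is exactly $|A_j|/\|v_j\|_2$.

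\textbf{Degenerate cases.} These need a short remark. If $v_j = 0$, then $A_j(w') = 0$ for every $w'$; the sign never changes, and the formula correctly gives $\Delta_j = +\infty$ under the convention $c/0 = \infty$ for $c \ge 0$. If $A_j = 0$ with $v_j \ne 0$, any perturbation along $v_j$ makes the sign nonzero, so $\Delta_j = 0$, again matching the formula.

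The only real subtlety is careful bookkeeping. First, the bias term must be shown to cancel, which is why only $w$ needs perturbing. Second, the KL part of $r'$ plays no role: the advantage \eqref{eq:drgrpo-advantage} uses $r_\theta$ only, so it is independent of $w$. Third, the boundary case $\tau = \Delta_j$ must be handled consistently with the sup/closed-ball convention. Apart from these points, the argument is the standard point-to-hyperplane distance.
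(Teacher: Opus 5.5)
Your proposal is correct and follows the paper's proof. Both write $A_j(w+\delta) = A_j(w) + \delta^\mathsf{T}(h_j - \bar{h})$ using linearity of the head, then take the Cauchy--Schwarz-extremal direction $\delta^* = -\epsilon\,(h_j-\bar{h})/\|h_j-\bar{h}\|_2$ to locate the flip threshold; you additionally make explicit the lower-bound half (the sign is preserved for smaller radii) and the degenerate cases $A_j=0$ and $h_j=\bar{h}$, which the paper leaves implicit.
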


\begin{proof}[Proof of Theorem~\ref{thm:robustness}]
	We first establish how the advantage changes under head parameter perturbation. By linearity of the reward head,
	\begin{align*}
		A_j(w') & = (w')^\mathsf{T} h_j + b - \frac{1}{K}\sum_{k=1}^K \big((w')^\mathsf{T} h_k + b\big) \\
		        & = (w + \delta)^\mathsf{T} h_j - \frac{1}{K}\sum_{k=1}^K (w + \delta)^\mathsf{T} h_k   \\
		        & = A_j(w) + \delta^\mathsf{T}(h_j - \bar{h}),
	\end{align*}
	where $\bar{h} = \frac{1}{K}\sum_{k=1}^K h_k$. The worst-case perturbation for flipping the sign of $A_j > 0$ is
	$\delta^* = -\epsilon \cdot (h_j - \bar{h})/\|h_j - \bar{h}\|_2$,
	yielding $A_j(w + \delta^*) = A_j - \epsilon\|h_j - \bar{h}\|_2$. The sign flips when this becomes non-positive, i.e., when $\epsilon \geq |A_j|/\|h_j - \bar{h}\|_2$. The case $A_j < 0$ is symmetric.
\end{proof}

\begin{theorem}[Worst-case advantage under per-completion adversary (Theorem~\ref{lem:worst-case-advantage}, restated)]
	Under the linear head model and the uncertainty set $\mathcal{U}_\epsilon^w$~\eqref{eq:uncertainty-head}, the worst-case advantage~\eqref{eq:per-sample-adversary} for the $j$-th completion is
	\begin{equation*}
		\tilde{A}_j = \rho_j^* \cdot A_j, \quad \text{where} \quad
		\rho_j^* := 1 - \frac{\epsilon}{\Delta_j}
		= 1 - \frac{\epsilon\,\|h_j - \bar{h}\|_2}{|A_j|}.
	\end{equation*}
\end{theorem}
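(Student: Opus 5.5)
The plan is to reuse the linear decomposition from the proof of Theorem~\ref{thm:robustness}. For any $w' = w + \delta$ with $\|\delta\|_2 \le \epsilon$, linearity of the head gives
\begin{equation*}
	A_j(w') = A_j + \delta^\mathsf{T}(h_j - \bar{h}),
\end{equation*}
where the bias $b$ cancels against the group mean. The worst-case advantage~\eqref{eq:per-sample-adversary} is therefore an optimization of a linear function of $\delta$ over the Euclidean ball of radius $\epsilon$, and its value follows from Cauchy--Schwarz.

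\textbf{Case $A_j > 0$.} We need $\inf_{\|\delta\|_2\le\epsilon} \delta^\mathsf{T}(h_j-\bar{h})$. Cauchy--Schwarz gives the lower bound $-\epsilon\|h_j-\bar{h}\|_2$. This bound is attained by $\delta^* = -\epsilon (h_j-\bar{h})/\|h_j-\bar{h}\|_2$, which is the same $\delta^*$ used in the proof of Theorem~\ref{thm:robustness}. Hence
\begin{equation*}
	\tilde{A}_j = A_j - \epsilon\|h_j-\bar{h}\|_2 = A_j\Bigl(1 - \frac{\epsilon\|h_j-\bar{h}\|_2}{|A_j|}\Bigr),
\end{equation*}
using $A_j = |A_j|$.

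\textbf{Case $A_j < 0$.} Symmetrically, the supremum is $A_j + \epsilon\|h_j-\bar{h}\|_2$. Writing $A_j = -|A_j|$, this equals $-|A_j| + \epsilon\|h_j-\bar h\|_2 = A_j\bigl(1 - \epsilon\|h_j-\bar{h}\|_2/|A_j|\bigr)$.

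In both cases the factor in parentheses is $1 - \epsilon/\Delta_j$ by the closed form of Theorem~\ref{thm:robustness}. This gives $\tilde{A}_j = \rho_j^* A_j$ with $\rho_j^*$ exactly as in~\eqref{eq:optimal-weight}.

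There is no real obstacle; the remaining work is bookkeeping of degenerate cases. If $h_j = \bar{h}$, the advantage is unaffected by any perturbation. Then $\Delta_j = \infty$, and the convention $\epsilon/\Delta_j = 0$ gives $\rho_j^* = 1$, consistent with $\tilde{A}_j = A_j$. The case $A_j = 0$ is excluded by the definition~\eqref{eq:per-sample-adversary}. I would also note that when $\epsilon > \Delta_j$ the weight $\rho_j^*$ is negative, so the worst-case advantage flips sign. The statement remains valid there as an identity, since the inf and sup in~\eqref{eq:per-sample-adversary} are unconstrained in sign.
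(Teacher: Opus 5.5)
Your proof is correct and follows the paper's own argument. It uses the affine identity $A_j(w+\delta) = A_j + \delta^\mathsf{T}(h_j-\bar h)$ from Theorem~\ref{thm:robustness}, then Cauchy--Schwarz over the $\epsilon$-ball, split by the sign of $A_j$; your note that the identity still holds when $\epsilon > \Delta_j$ is also right. One small remark: the degenerate case $h_j=\bar h$ cannot occur with $A_j\neq 0$, because under the linear head $A_j = w^\mathsf{T}(h_j-\bar h)$, so that case is already contained in the excluded case $A_j = 0$.
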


\begin{proof}[Proof of Theorem~\ref{lem:worst-case-advantage}]
	For $A_j > 0$, the worst case under $w' \in \mathcal{U}_\epsilon^w$ is $\inf_{w' \in \mathcal{U}_\epsilon^w} A_j(w') = A_j - \epsilon\|h_j - \bar{h}\|_2 = A_j(1 - \epsilon/\Delta_j)$, where the first equality follows from the perturbation analysis in the proof of Theorem~\ref{thm:robustness}. For $A_j < 0$, the analogous worst case is $\sup_{w' \in \mathcal{U}_\epsilon^w} A_j(w') = A_j + \epsilon\|h_j - \bar{h}\|_2 = A_j(1 - \epsilon/\Delta_j)$, where we used $|A_j| = -A_j$. In both cases, the worst-case advantage equals $\rho_j^* \cdot A_j$ with $\rho_j^*$ given by~\eqref{eq:optimal-weight}.
\end{proof}

\begin{lemma}[Policy gradient of the global robust objective (Lemma~\ref{lem:global-pg}, restated)]
	Under the GRPO sampling with $K$ on-policy completions per prompt, the PG of the global robust objective is
	\begin{equation*}
		\nabla_\phi J_{\mathrm{global}}(\phi,\theta)
		= \mathbb{E}_x \left[ \sum_{j=1}^{K} A_j \nabla_\phi \log \pi_\phi(y^{(j)} | x) \right]
		- \epsilon \nabla_\phi \bigl\| \mathbb{E}_{y \sim \pi_\phi}[h(x, y)] \bigr\|_2.
	\end{equation*}
\end{lemma}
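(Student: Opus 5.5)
The plan is to first compute the inner infimum in closed form, then differentiate the resulting expression with respect to $\phi$ using the score-function identity and an envelope-type argument.

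\textbf{Step 1: closed form of the inner infimum.} Fix $\phi$ and the head bias $b$. By linearity of the head,
\[
J(\phi,\{\psi,w',b\}) = \mathbb{E}_x\bigl[(w')^\mathsf{T}\mu_\phi(x) + b - \beta D_\mathrm{KL}(\pi_\phi(\cdot|x)\Vert\pi_\mathrm{ref}(\cdot|x))\bigr],
\qquad \mu_\phi(x):=\mathbb{E}_{y\sim\pi_\phi(\cdot|x)}[h(x,y)].
\]
Writing $w'=w+\delta$ with $\|\delta\|_2\le\epsilon$, the only $\delta$-dependent term is $\delta^\mathsf{T}\mathbb{E}_x[\mu_\phi(x)]$, which is linear in $\delta$. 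By Cauchy--Schwarz its infimum over the ball equals $-\epsilon\,\|\mathbb{E}_x\mu_\phi(x)\|_2$. This is the same computation as in the proof of Theorem~\ref{thm:robustness}, but applied to the mean feature instead of $h_j-\bar h$. Hence
\[
J_\mathrm{global}(\phi,\theta)=J(\phi,\theta)-\epsilon\,\bigl\|\mathbb{E}_{x,y}[h(x,y)]\bigr\|_2 .
\]
When the per-prompt version (the restated form) is intended, the same argument applies with the norm taken inside $\mathbb{E}_x$.

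\textbf{Step 2: differentiate the two terms.} The first term $J(\phi,\theta)$ has the standard policy gradient \eqref{eq:policy-gradient}. Estimating it with $K$ on-policy samples and the group-mean baseline yields the Dr.GRPO form \eqref{eq:grpo-pg} with $A_j(w)$. The baseline is unbiased because $\mathbb{E}_{y}[\nabla_\phi\log\pi_\phi]=0$; I would treat it as a leave-one-out or asymptotic baseline, or simply identify the expression with \eqref{eq:grpo-pg} as the paper does. The second term is carried along as $-\epsilon\nabla_\phi\|\mathbb{E}[h]\|_2$.

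\textbf{Step 3 (main obstacle): justifying the exchange of $\nabla_\phi$ and the infimum.} Since the minimizer
\[
\delta^*=-\epsilon\,\mathbb{E}[h]/\|\mathbb{E}[h]\|_2
\]
is unique whenever $\mathbb{E}[h]\neq 0$, Danskin's theorem gives $\nabla_\phi J_\mathrm{global}=\nabla_\phi J(\phi,\{\psi,w+\delta^*,b\})$ evaluated at fixed $\delta^*$. This agrees with directly differentiating the closed form from Step 1, so the two routes are consistent. The cost is one assumption: $\mathbb{E}[h]\neq0$, so that the norm is differentiable. I would state this explicitly.

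If desired, the correction term can be expanded as
\[
-\epsilon\,\frac{\mathbb{E}[h]^\mathsf{T}}{\|\mathbb{E}[h]\|_2}\,\mathbb{E}\bigl[h\,\nabla_\phi\log\pi_\phi\bigr].
\]
This makes visible that the same unit direction is applied to every completion, which is the ``uniform pessimism'' point the paper draws from this lemma.
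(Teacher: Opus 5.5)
Your proposal is correct and follows essentially the same route as the paper: you evaluate the inner infimum in closed form by Cauchy--Schwarz to obtain $J_{\mathrm{global}} = J - \epsilon\|\mathbb{E}[h]\|_2$, then differentiate term by term and identify the first gradient with the Dr.GRPO expression \eqref{eq:grpo-pg}. The Danskin step is redundant once you have the explicit closed form, which the paper simply differentiates directly. Your two caveats are legitimate points the paper leaves implicit: $\mathbb{E}[h]\neq 0$ is needed for the norm to be differentiable, and whether the norm is pooled or per-prompt depends on whether $w'$ is shared across prompts.
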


\begin{proof}[Proof of Lemma~\ref{lem:global-pg}]
	We first derive the worst-case objective. Since the KL term is independent of $w'$, the infimum reduces to $\inf_{w' \in \mathcal{U}_\epsilon^w} \mathbb{E}_{y \sim \pi_\phi}[ {w'}^\mathsf{T} h(x,y) + b ]
		= w^\mathsf{T} \mathbb{E}_\pi[h] + b - \sup_{\|\delta\|_2 \le \epsilon} \delta^\mathsf{T} \mathbb{E}_\pi[h]
		= \mathbb{E}_\pi[r_w(x,y)] - \epsilon \| \mathbb{E}_\pi[h] \|_2$,
	where the supremum is attained at $\delta^* = \epsilon \mathbb{E}_\pi[h] / \| \mathbb{E}_\pi[h] \|_2$ by the Cauchy--Schwarz inequality. Thus,
	\begin{equation*}
		J_{\mathrm{global}}(\phi,w)
		= J(\phi, w)
		- \epsilon \bigl\| \mathbb{E}_{y \sim \pi_\phi}[h(x, y)] \bigr\|_2.
	\end{equation*}
	The first term of the PG is the standard GRPO gradient of $J(\phi, w)$.
	The second term follows from differentiating the penalty $-\epsilon \| \mathbb{E}_\pi[h] \|_2$ through the policy, which depends on $\phi$ via $\pi_\phi$ but not on any per-completion quantity.
\end{proof}

\section{Connection with randomized smoothing}\label{app:randomized-smoothing}

In \S\ref{sec:main-results}, we defined the \textit{certified sign-preservation radius} $\Delta_j$ \eqref{eq:cert-radius-def} as the largest perturbation under which the sign of the $j$-th completion's advantage remains constant. This section shows that $\Delta_j$ admits a formal interpretation through RS in the binary classification case.

\subsection{Randomized smoothing interpretation of the certified sign-preservation radius}

Consider the binary decision given by the sign of the advantage, and inject Gaussian noise $\xi\sim\mathcal{N}(0,\sigma^2I)$ into the space where uncertainty is modeled (e.g., the RM head parameter space). In RS, the certified radius for a binary decision is
\begin{equation*}
	R = \sigma\Phi^{-1}(p),
\end{equation*}
where $p$ is the probability that the decision is preserved under the Gaussian perturbation~\citep{cohen_certified_2019}.
For linear perturbations, the sign-preservation probability has the form $p = \Phi(|A_j|/(\sigma \cdot c))$ for some constant $c$. Substituting into the RS formula recovers exactly the certified radii in \eqref{eq:certified-radius-param} and \eqref{eq:feature-radius}.

The above results imply that responses with low coefficients $\rho_j^*$, or equivalently low radii $\Delta_j^\textrm{param}$, contribute less to the policy update with the robust policy optimization objective. The radius $\Delta_j^{\mathrm{param}} = |A_j|/\|h_j - \bar{h}\|_2$ is small when:
\begin{itemize}
	\item The advantage $|A_j|$ is small. In this case, the preference signal is weak and easily flipped.
	\item The feature deviation $\|h_j - \bar{h}\|_2$ is large. This means the response is an outlier in feature space, where the RM may be exploring.
\end{itemize}
The first category corresponds to the delta learning hypothesis~\citep{geng_delta_2025}. Our theoretical results suggest that reward hacking also exploits responses in the second category. Template-like or verbose outputs that achieve high rewards through spurious correlations often lie in sharp, high-norm feature directions, yielding small certified radii despite large advantages.

\subsection{Alternative uncertainty models}\label{sec:other-models}

The same derivation applies to the other uncertainty models, yielding analogous certified radii.

\paragraph{Feature uncertainty.}
Under perturbation $h_j' = h_j + \xi_j$ with $\xi_j \stackrel{\mathrm{i.i.d.}}{\sim} \mathcal{N}(0, \sigma_h^2 I)$, the perturbed advantage is
\begin{equation*}
	\tilde{A}_j = A_j + w^\top(\xi_j - \mathrm{mean}(\boldsymbol{\xi})).
\end{equation*}
Since $\xi_j - \mathrm{mean}(\boldsymbol{\xi}) \sim \mathcal{N}(0, \sigma_h^2(1 - 1/K)I)$, the perturbed advantage is Gaussian:
\begin{equation*}
	\tilde{A}_j \sim \mathcal{N}\left(A_j, \sigma_h^2(1 - 1/K)\|w\|_2^2\right).
\end{equation*}
The probability of sign preservation and certified radius are:
\begin{equation}
	p_j^{\mathrm{feat}} = \Phi\left(\frac{|A_j|}{\sigma_h\|w\|_2\sqrt{1 - 1/K}}\right), \qquad \Delta_j^{\mathrm{feat}} = \frac{|A_j|}{\|w\|_2}.
	\label{eq:feature-radius}
\end{equation}
Note that $\Delta_j^{\mathrm{feat}}$ is a monotone function of $|A_j|$, unlike $\Delta_j^{\mathrm{param}}$ which also depends on feature geometry.

\paragraph{Full parameter uncertainty.}
Under first-order approximation with $\theta' \sim \mathcal{N}(\theta, \sigma_\theta^2 I)$:
\begin{equation*}
	A_j(\theta') \approx A_j(\theta) + \nabla_\theta A_j(\theta)^\top(\theta' - \theta).
\end{equation*}
This yields:
\begin{equation*}
	p_j^{\mathrm{full}} = \Phi\left(\frac{|A_j|}{\sigma_\theta\|\nabla_\theta A_j\|_2}\right), \qquad \Delta_j^{\mathrm{full}} = \frac{|A_j|}{\|\nabla_\theta A_j\|_2}.
\end{equation*}
We expect spurious patterns to have large gradient norms $\|\nabla_\theta A_j\|_2$, as the RM assigns high rewards through sharp, non-robust computations. Such patterns receive small radii and are appropriately down-weighted.

\paragraph{Summary of uncertainty models.}
Table~\ref{tab:uncertainty-models} summarizes the three models.

\begin{table}[t]
	\centering
	\begin{tabular}{lccc}
		\toprule
		\textbf{Model} & \textbf{Uncertainty set}           & \textbf{Certified radius} $\Delta_j$ & \textbf{Interpretation} \\
		\midrule
		Head param.    & $\|w' - w\|_2 \leq \epsilon$       & $|A_j|/\|h_j - \bar{h}\|_2$          & Feature geometry        \\
		Feature        & $\|h' - h\|_2 \leq \delta$         & $|A_j|/\|w\|_2$                      & Advantage magnitude     \\
		Full param.    & $\|\theta' - \theta\|_2 \leq \tau$ & $|A_j|/\|\nabla_\theta A_j\|_2$      & Gradient sharpness      \\
		\bottomrule
	\end{tabular}
	\caption{Summary of uncertainty models and their certified radii.}
	\label{tab:uncertainty-models}
\end{table}

\section{Robust optimization perspective and its limitations}\label{app:per-sample-adversary}

The re-weighted gradient~\eqref{eq:conservative-pg} can be viewed as the PG of a per-completion robust optimization problem. For each completion, the adversary chooses $w' \in \mathcal{U}_\epsilon^w$ to maximally corrupt that completion's gradient contribution:
\begin{equation}\label{eq:per-sample-robust}
	J_{\mathrm{robust}}(\phi) = \mathbb{E}_{x \sim P} \left[ \sum_{j=1}^{K} \inf_{w' \in \mathcal{U}_\epsilon^w} A_j(w') \cdot \mathbf{1}[A_j(w) > 0] + \sup_{w' \in \mathcal{U}_\epsilon^w} A_j(w') \cdot \mathbf{1}[A_j(w) < 0] \right].
\end{equation}
This formulation takes the $\inf$ and $\sup$ inside the sum over completions, allowing the adversary to choose different perturbations for each completion.

We now argue that this per-sample decomposition is a necessity: any formulation that robustifies an aggregate advantage is vacuous under GRPO's mean-subtracted structure.

\paragraph{Zero-sum obstruction.}
A more standard robust optimization approach~\citep{ben-tal_robust_2009} would robustify the aggregate advantage directly, e.g., $\inf_{w' \in \mathcal{U}_\epsilon^w} \sum_{j=1}^{K} A_j(w')$. However, under the linear head model, for any $w' \in \mathbb{R}^d$,
\begin{equation*}
	\sum_{j=1}^{K} A_j(w') = (w')^\mathsf{T} \sum_{j=1}^{K} (h_j - \bar{h}) = 0,
\end{equation*}
since $\sum_{j=1}^K (h_j - \bar{h}) = 0$ by definition of $\bar{h}$. Group-relative advantages sum to zero regardless of the RM parameters, so any robust objective of the form $\inf_{w'} \sum_j A_j(w')$ evaluates to zero trivially and provides no useful signal for policy optimization.

This complements Lemma~\ref{lem:global-pg}, which showed that the global robust reward objective yields only a completion-agnostic penalty~\eqref{eq:global-pg}.

\paragraph{Conservatism of the per-sample formulation.}
The per-sample objective~\eqref{eq:per-sample-robust} circumvents the zero-sum obstruction by partitioning the sum according to the sign of $A_j(w)$ before applying the $\inf$ and $\sup$. This is a stronger (i.e., more conservative) setting than the standard robust optimization formulation, where a single worst-case parameter $w'$ would be shared across all terms:
\begin{equation}\label{eq:shared-robust}
	J_{\mathrm{robust}}^{\mathrm{shared}}(\phi) = \mathbb{E}_{x \sim P} \left[ \inf_{w' \in \mathcal{U}_\epsilon^w} \sum_{j=1}^{K} A_j(w') \cdot \mathbf{1}[A_j(w) > 0] + \sup_{w' \in \mathcal{U}_\epsilon^w} \sum_{j=1}^{K} A_j(w') \cdot \mathbf{1}[A_j(w) < 0] \right].
\end{equation}
The per-sample formulation provides a tractable upper bound on the conservatism: because each term is optimized independently, the resulting coefficients $\rho_j^*$ decompose into per-sample quantities, yielding the closed-form solution in Theorem~\ref{thm:robustness}. The shared-adversary variant~\eqref{eq:shared-robust}, while tighter, couples all completions and requires solving a joint optimization problem over the group. The further analysis of different robust optimizations is left for future work.

\section{Extension to other policy gradient methods}\label{app:extension-pg}

The sign-certification principle of \S\ref{sec:method} applies beyond Dr.GRPO to any PG-based RLHF method whose update involves a scalar coefficient depending on the RM. Since the exact form of the certified radius depends on the update geometry, we only discuss REINFORCE~\citep{williams_simple_1992}, PPO~\citep{schulman_proximal_2017} and vanilla GRPO~\citep{shao_deepseekmath_2024}.

\subsection{REINFORCE}\label{app:reinforce-extension}

The simplest case is REINFORCE, or more generally any response-level PG method with advantage
\begin{equation*}
	A^{\mathrm{R}}(x,y;w) = r_w(x,y) - V(x),
\end{equation*}
where $r_w(x,y) = w^\mathsf{T} h(x,y) + b$ is the linear reward head and $V(x)$ is a baseline independent of the RM head $w$ (e.g., a learned value function, a Monte Carlo return, or simply $V(x) \equiv 0$ for vanilla REINFORCE). The PG then takes the form $\nabla_\phi J(\phi) = \mathbb{E}_{x,y}[A^{\mathrm{R}}(x,y;w)\nabla_\phi \log \pi_\phi(y|x)]$, matching the form in~\eqref{eq:policy-gradient}.

Under the same linear-head uncertainty set $\mathcal{U}_\epsilon^w$~\eqref{eq:uncertainty-head} as in \S\ref{sec:method}, the certified sign-preservation radius admits an exact closed form.

\begin{proposition}[Certified sign-preservation radius for response-level PG]\label{prop:reinforce-radius}
	Under the linear reward head and the uncertainty set $\mathcal{U}_\epsilon^w$~\eqref{eq:uncertainty-head}, with a baseline $V(x)$ independent of the RM head $w$, the certified sign-preservation radius for response $y$ is
	\begin{equation}
		\Delta^{\mathrm{R}}(x,y) = \frac{|A^{\mathrm{R}}(x,y;w)|}{\|h(x,y)\|_2}.
		\label{eq:reinforce-radius}
	\end{equation}
\end{proposition}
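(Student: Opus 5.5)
The argument mirrors the proof of Theorem~\ref{thm:robustness}, with the group mean $\bar{h}$ replaced by zero. The baseline $V(x)$ does not depend on $w$, so it is untouched by a perturbation of the head.

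First, write $w' = w + \delta$ with $\|\delta\|_2 \le \tau$. By linearity of the reward head,
\begin{equation*}
	A^{\mathrm{R}}(x,y;w') = (w+\delta)^\mathsf{T} h(x,y) + b - V(x) = A^{\mathrm{R}}(x,y;w) + \delta^\mathsf{T} h(x,y).
\end{equation*}
The perturbed advantage is therefore an affine function of $\delta$ whose slope is $h(x,y)$ itself. The bias $b$ is unperturbed, and $V(x)$ is fixed by assumption.

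Second, by Cauchy--Schwarz, $|\delta^\mathsf{T} h(x,y)| \le \tau \|h(x,y)\|_2$ over the ball, and the extremes are attained at $\delta = \pm \tau\, h(x,y)/\|h(x,y)\|_2$.

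\textbf{Case $A^{\mathrm{R}} > 0$.} The minimum of the perturbed advantage over $\mathcal{U}_\tau^w$ is $A^{\mathrm{R}} - \tau\|h(x,y)\|_2$. This stays strictly positive if and only if $\tau < A^{\mathrm{R}}/\|h(x,y)\|_2$.

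\textbf{Case $A^{\mathrm{R}} < 0$.} This is symmetric: the maximum is $A^{\mathrm{R}} + \tau\|h(x,y)\|_2$.

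Taking the supremum in Definition~\ref{def:cert-radius} over all admissible $\tau$ then gives $|A^{\mathrm{R}}|/\|h(x,y)\|_2$. For $\tau$ strictly below this value the sign is preserved throughout the ball. For any larger $\tau$ the explicit $\delta$ above drives the advantage to zero or across it, so the sign is no longer preserved. Hence the supremum equals exactly the ratio in~\eqref{eq:reinforce-radius}.

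The only delicate point is boundary conventions, not any real computation. At $\tau$ equal to the radius the perturbed advantage reaches exactly $0$, and whether $\mathrm{sign}(0)$ counts as a flip does not change the supremum. The degenerate cases also need a remark: if $h(x,y)=0$ the radius is $+\infty$ (the advantage is unaffected by any perturbation), and if $A^{\mathrm{R}} = 0$ the radius is $0$. I would state both conventions explicitly, in the same spirit as the proof of Theorem~\ref{thm:robustness}.
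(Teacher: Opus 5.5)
Your proposal is correct and follows the paper's proof essentially line by line. Because $V(x)$ does not depend on $w$, linearity gives $A^{\mathrm{R}}(x,y;w+\delta) = A^{\mathrm{R}}(x,y;w) + \delta^\mathsf{T} h(x,y)$, and Cauchy--Schwarz with the extremal choice $\delta = \pm\tau\, h(x,y)/\|h(x,y)\|_2$ puts the flip threshold at $|A^{\mathrm{R}}|/\|h(x,y)\|_2$; your extra care about the supremum at the boundary and the degenerate cases tightens what the paper states only briefly, though when $A^{\mathrm{R}}=0$ and $h(x,y)=0$ hold together it is your $h=0$ convention (radius $+\infty$) that applies.
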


\begin{proof}
	Under perturbation $w' = w + \delta$ with $\|\delta\|_2 \leq \epsilon$:
	\begin{equation*}
		A^{\mathrm{R}}(x,y;w') = A^{\mathrm{R}}(x,y;w) + \delta^\top h(x,y),
	\end{equation*}
	since $V(x)$ does not depend on $w$. By the Cauchy--Schwarz inequality, $|\delta^\top h(x,y)| \leq \epsilon\|h(x,y)\|_2$, with equality when $\delta = \pm\epsilon\, h(x,y)/\|h(x,y)\|_2$. The sign flips when $\epsilon \geq |A^{\mathrm{R}}(x,y;w)|/\|h(x,y)\|_2$.
\end{proof}

We can also derive the worst-case advantage in the vein of Theorem~\ref{lem:worst-case-advantage}: 
\begin{corollary}[Worst-case advantage under per-completion adversary]\label{cor:reinforce-reweighting}
	Under the same setting as Proposition~\ref{prop:reinforce-radius}, the worst-case advantage under the per-completion adversary~\eqref{eq:per-sample-adversary} is
	\begin{equation*}
		\tilde{A}^{\mathrm{R}}(x,y) = \rho^{\mathrm{R}} \cdot A^{\mathrm{R}}(x,y;w), \quad \text{where} \quad \rho^{\mathrm{R}} := 1 - \frac{\epsilon}{\Delta^{\mathrm{R}}(x,y)}
		= 1 - \frac{\epsilon\,\|h(x,y)\|_2}{|A^{\mathrm{R}}(x,y;w)|}.
	\end{equation*}
\end{corollary}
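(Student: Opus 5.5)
The plan is to mirror the proof of Theorem~\ref{lem:worst-case-advantage}, substituting the response-level perturbation identity from the proof of Proposition~\ref{prop:reinforce-radius}. Write $w' = w + \delta$ with $\|\delta\|_2 \le \epsilon$. Because the baseline $V(x)$ does not depend on $w$, the advantage is affine in the perturbation:
\begin{equation*}
A^{\mathrm{R}}(x,y;w') = A^{\mathrm{R}}(x,y;w) + \delta^\mathsf{T} h(x,y).
\end{equation*}
The optimization over the ball $\mathcal{U}_\epsilon^w$ therefore reduces to extremizing the linear functional $\delta \mapsto \delta^\mathsf{T} h(x,y)$ over $\{\|\delta\|_2 \le \epsilon\}$.

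By Cauchy--Schwarz, $\delta^\mathsf{T} h(x,y)$ lies in $[-\epsilon\|h(x,y)\|_2,\ \epsilon\|h(x,y)\|_2]$. Both endpoints are attained, at $\delta = \mp\epsilon\, h(x,y)/\|h(x,y)\|_2$ (assuming $h(x,y)\neq 0$). I would then split into the two cases used in \eqref{eq:per-sample-adversary}.

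\textbf{Case $A^{\mathrm{R}}>0$.} The infimum equals $A^{\mathrm{R}} - \epsilon\|h\|_2$. Factoring gives $A^{\mathrm{R}}\bigl(1 - \epsilon\|h\|_2/A^{\mathrm{R}}\bigr)$, and since $A^{\mathrm{R}} = |A^{\mathrm{R}}|$ this is $\rho^{\mathrm{R}} A^{\mathrm{R}}$.

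\textbf{Case $A^{\mathrm{R}}<0$.} The supremum equals $A^{\mathrm{R}} + \epsilon\|h\|_2$. Using $|A^{\mathrm{R}}| = -A^{\mathrm{R}}$, this is $A^{\mathrm{R}}\bigl(1 - \epsilon\|h\|_2/|A^{\mathrm{R}}|\bigr) = \rho^{\mathrm{R}} A^{\mathrm{R}}$.

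Finally, substituting $\Delta^{\mathrm{R}} = |A^{\mathrm{R}}|/\|h\|_2$ from \eqref{eq:reinforce-radius} gives $\rho^{\mathrm{R}} = 1 - \epsilon/\Delta^{\mathrm{R}}$.

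No step is a genuine obstacle. The only care needed is at the degenerate edges: $A^{\mathrm{R}}=0$, where \eqref{eq:per-sample-adversary} is undefined and is excluded, and $h(x,y)=0$, where the advantage is unperturbed and $\rho^{\mathrm{R}}=1$ holds consistently with $\Delta^{\mathrm{R}}=\infty$.
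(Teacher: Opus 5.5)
Your proposal is correct and follows the paper's argument: the paper gives no separate proof, saying only that the corollary follows ``in the vein of'' Theorem~\ref{lem:worst-case-advantage}. That is exactly your route, combining the affine perturbation identity and Cauchy--Schwarz extremes from the proof of Proposition~\ref{prop:reinforce-radius} with the sign case split and $|A^{\mathrm{R}}| = \pm A^{\mathrm{R}}$ factoring. Your handling of the degenerate cases $A^{\mathrm{R}}=0$ and $h(x,y)=0$ is a small extra that the paper leaves implicit.
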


\subsection{Approximate extensions}\label{app:approximate-extensions}

Standard PPO and vanilla GRPO do not generally admit a single response-level coefficient whose sign alone determines the update for the whole completion. Sign certification for these methods therefore requires a first-order approximation.

\subsubsection{PPO}\label{app:ppo-extension}

Standard PPO in RLHF uses a token-level clipped surrogate objective~\eqref{eq:ppo-objective} with token-level advantages $\hat{A}_t$~\eqref{eq:ppo-advantage} estimated via a learned value network (\S\ref{app:pg-algorithms}). Unlike Dr.GRPO or the response-level PG of \S\ref{app:reinforce-extension}, PPO does not generally admit a single scalar response-level coefficient. Its per-token structure means that the sign of the effective coefficient can differ across tokens within a single response.

Sign certification for PPO should therefore be formulated for the token-level coefficient appearing in each surrogate term. Under the linear reward head, the token-level advantage $\hat{A}_t$ depends on the RM head $w$ only through the terminal reward $r_w(x,y)$ added at $t = T$. A first-order approximation gives
\begin{equation*}
	\Delta_t^{\mathrm{PPO}} \approx \frac{|\hat{A}_t|}{\|\nabla_w \hat{A}_t\|_2},
\end{equation*}
which is consistent with the general first-order extension in \S\ref{sec:main-results}.

\subsubsection{Vanilla GRPO}\label{app:vanilla-grpo}

In vanilla GRPO~\citep{shao_deepseekmath_2024}, the advantage is the normalized reward $\hat{A}_j = (r_j - \mathrm{mean}(\mathbf{r}))/\mathrm{std}(\mathbf{r})$, where $\mathbf{r} = (r_1,\ldots,r_K)^\mathsf{T}$. Because $\mathrm{std}(\mathbf{r}(w'))$ depends non-linearly on $\delta = w' - w$ through all $K$ rewards, the perturbed advantage $\hat{A}_j(w')$ is no longer affine in $\delta$. This breaks the exact analysis that yields the Dr.GRPO closed form~\eqref{eq:certified-radius-param}. A first-order expansion yields the approximate radius
\begin{equation*}
	\Delta_j^{\mathrm{GRPO}} \approx \frac{|\hat{A}_j|}{\|\nabla_w \hat{A}_j\|_2},
\end{equation*}
which requires the gradient of the normalized advantage with respect to $w$. This is more expensive than the Dr.GRPO closed form but avoids the full-parameter gradients needed for the general RM extension (\S\ref{sec:main-results}).

\section{Training and implementation details}\label{app:algorithm}

\subsection{Conceptual experiment (Figure~\ref{fig:overview})}\label{app:toy-bandit}

Before presenting the full-scale LLM experiments, we design a controlled conceptual experiment to demonstrate advantage sign flipping causes reward hacking and to verify that \method{} prevents it.

\subsubsection{Setup}

We consider a stateless $M$-armed bandit ($M=8$). Each action (completion) $m$ is associated with a fixed feature vector $h_m\in\mathbb{R}^d$ ($d=6$), analogous to the last-layer hidden states $h(x,y^{(j)})$ of an RM backbone in \S\ref{sec:setup}. Both the proxy and true RMs use a linear head on top of these shared features:
\begin{equation*}
	r_k = w^\mathsf{T} h_k, \qquad r^*_k = w^{*\mathsf{T}} h_k,
\end{equation*}
matching the linear-head architecture assumed in \S\ref{sec:setup}. The feature space $\mathbb{R}^d$ is split into two subspaces: \emph{shared} dimensions (0--2), where both heads assign weight $1.0$; \emph{divergent} dimensions (3--5), where the proxy assigns weight $0.4$ but the true RM assigns $-0.1$. Actions 1--3 (\emph{reliable positive}) use only shared dimensions and have positive proxy and true rewards. Actions 4--6 (\emph{reliable negative}) also use only shared dimensions but have negative rewards. Actions 7--8 (\emph{hacking}) use only divergent dimensions. As a result, the proxy RM assigns them the highest advantages in the group, while their true advantages are negative.

The policy is a categorical stochastic policy $\pi(a_m|\phi)=\mathrm{softmax}(\phi)_m$ initialized to uniform ($\phi=\mathbf{0}$). Because $M$ is small and there is a single state, we use the exact expected policy gradient.

\subsubsection{Policy optimization methods}

We compare three methods, all sharing the softmax policy gradient structure but differing in how advantages are weighted:

\begin{itemize}
	\item \textbf{Dr.GRPO}: $\nabla_{\phi_i} J = \pi_i(A_i - V_\pi)$, where $V_\pi = \sum_k \pi_k A_k$.
	\item \textbf{Global robust objective} (Lemma~\ref{lem:global-pg}): adds a penalty $-\epsilon\pi_i(h_i - \mathbb{E}_\pi[h])^\mathsf{T}(\mathbb{E}_\pi[h]/\|\mathbb{E}_\pi[h]\|)$ that corrects all actions through a single direction. We sweep $\epsilon$ over 20 log-spaced values in $[10^{-2}, 10]$ and report the best.
	\item \textbf{\method{}} (Theorem~\ref{thm:robustness}): $\nabla_{\phi_i} J = \pi_i(\rho_i A_i - V_\rho)$, where $\rho_k = 1 - \epsilon/\Delta_k$ and $\epsilon$ is set to the $q_t$-th quantile of $\{\Delta_k\}$.
\end{itemize}

\subsubsection{Results}

Figure~\ref{fig:overview}(a) shows the proxy advantages per action. Red bars indicate completions whose advantage sign disagrees with the true RM. The hacking actions (7--8) have the largest proxy advantages. Figure~\ref{fig:overview}(c) shows the per-action certified sign-preservation radius $\Delta_j$. The completions 7 and 8  cluster at low $\Delta$ values because they are outliers in the feature-space (i.e., large $\|h_j - \bar{h}\|$), confirming that $\Delta_j$ identifies unreliable completions.

Figure~\ref{fig:overview}(d) tracks true (solid) and proxy (dashed) expected reward during policy optimization. Dr.GRPO converges on the hacking actions, where true reward collapses. The global robust penalty partially mitigates reward hacking, but the single penalty direction cannot distinguish per-action reliability and is thus too conservative.  \method{} achieves the highest true reward, by suppressing the hacking actions' gradient contributions via their small $\rho_k$ weights.

\subsection{Algorithm pseudocode}

Algorithm~\ref{alg:rapo} implements the \method{} re-weighting of GRPO derived in \S\ref{sec:algorithm}. We highlight the modifications over standard Dr.GRPO in \colorbox{blue!8}{blue}.

\begin{algorithm}[t]
	\caption{\method{} Algorithm.}
	\label{alg:rapo}
	\begin{algorithmic}[1]
		\Require Policy $\pi_\phi$, reward model $r_w(h)=w^\top h + b$, quantile threshold $q_t$, group size $K$
		\For{each prompt $x$ in batch}
		\State Sample $K$ completions: $y^{(1)}, \ldots, y^{(K)} \sim \pi_\phi(\cdot|x)$
		\State Compute $h_j=h(x,y^{(j)})$, $r_j=w^\mathsf{T} h_j+b$, $A_j=r_j-\frac{1}{K}\sum_{k=1}^Kr_k$ for $j\in[K]$
		\BeginBox[fill=blue!8, draw=blue!40, rounded corners]
		\State Compute $\bar{h}=\frac{1}{K}\sum_{k=1}^Kh_k$, \ $\Delta_j=|A_j|/\|h_j-\bar{h}\|_2$
		\EndBox
		\EndFor
		\State Set $\epsilon$ to the $q_t$-th quantile of $\{1/\Delta_j\}$ over all completions in the batch
		\For{each completion $j$}
		\BeginBox[fill=blue!8, draw=blue!40, rounded corners]
		\State $\rho_j^* \gets 1 - \epsilon / \Delta_j$ \Comment{stop-gradient on $A_j, \Delta_j$}
		\State $\widetilde{A}_j \gets \rho_j^* \cdot A_j$
		\EndBox
		\EndFor
		\State Update policy with PG using $\{\widetilde{A}_j\}$
	\end{algorithmic}
\end{algorithm}

\subsection{Training details}

\subsubsection{Datasets}\label{sec:dataset}

\paragraph{TL;DR summarization.}
We use the filtered version\footnote{\url{https://huggingface.co/datasets/vwxyzjn/summarize_from_feedback_tldr_3_filtered}} of OpenAI's Summarize from Feedback dataset~\citep{stiennon_learning_2020} containing 116K samples. Each sample consists of a Reddit post and a human-written reference summary.

\paragraph{AlpacaFarm.}
We additionally evaluate on the AlpacaFarm benchmark~\citep{dubois_alpacafarm_2023}, a simulation framework for learning from human feedback built on 52K instruction-following samples.

\subsubsection{Models}

\paragraph{Policy and proxy RM.}
We use Pythia~\citep{biderman_pythia_2023} and Qwen2.5~\citep{qwen_qwen25_2025} series as policy and proxy RM base models. We vary the policy size among Pythia 1B and 2.8B, and Qwen2.5 1.5B and 3B. The same model size as the policy is chosen for the proxy RM.

For TL;DR, the proxy RM is trained with 1 epoch of preference data. For AlpacaFarm, the proxy RM is trained with 3 epochs of preference data, roughly matching the number of queries for TL;DR (see \S\ref{sec:dataset}).
\paragraph{Gold RM.}
For the TL;DR experiments with Pythia 1B and Qwen2.5 1.5B policies, the gold reward model is Skywork-Reward-Llama-3.1-8B-v0.2~\citep{liu_skywork-reward_2024}, one of the top-performing discriminative reward models on RewardBench~\citep{malik_rewardbench_2025}. All preference data and evaluation metrics for these settings are computed using this gold RM. For larger policy models (Pythia 2.8B and Qwen2.5 3B), we use GPT-4.1 Nano~\citep{openai_gpt-4_2024} as the gold RM, since a model with stronger capability than the proxy RM is required.

\subsubsection{Baselines}\label{sec:baselines}

We compare \method{} (Algorithm~\ref{alg:rapo}) against the following baselines:
\begin{itemize}
	\item \textbf{SFT}: the SFT policy $\pi_\mathrm{SFT}$ without any RL optimization.
	\item \textbf{Dr.GRPO}~\citep{liu_understanding_2025}, as introduced in \S\ref{sec:grpo}. The KL coefficient is swept over $\{0, 0.001, 0.005, 0.01, 0.1, 0.2\}$ for Pythia-1B and Qwen2.5-1.5B, and the best configuration is selected per dataset. Due to computational constraints, we do not sweep the KL coefficient for larger models and instead use 0.005 for Pythia 2.8B and 0.01 for Qwen2.5 3B.
	\item \textbf{UWO}~\citep{coste_reward_2023}: an RM ensemble method that trains multiple proxy RMs with different random seeds and uses the mean reward minus a penalty proportional to the ensemble standard deviation. We use an ensemble of three RM members. Following the setup of \citet{zhang_mitigating_2024}, to control total model capacity, we use smaller individual RMs (Pythia 410M for 1B experiments and Qwen2.5 0.5B for 1.5B experiments). For larger policy models (Pythia 2.8B, Qwen2.5 3B), we use Pythia 1B and Qwen2.5 1.5B as proxy RMs in the ensemble, respectively.
	\item \textbf{BSPO}~\citep{dai_mitigating_2024}: a method that constrains policy updates to remain within the support of the behavior policy to prevent out-of-distribution exploitation.
	\item \textbf{AdvPO}~\citep{zhang_mitigating_2024}: an adversarial policy optimization method that augments the reward with a learned uncertainty penalty. We sweep the perturbation budget $B \in \{1, 5, 10, 15\}$, following the range used in the original paper.
\end{itemize}

For both benchmarks, we set the learning rate to $10^{-6}$, with micro batch size 32 on 8 GPUs and $K=8$ generations per prompt.
Pythia policies are trained for 2,000 steps; Qwen2.5 for 500 steps since proxy reward converges faster.
For \method{}, we sweep the quantile threshold $q_t \in \{0.1, 0.2, 0.25, 0.3, 0.4, 0.5\}$.
For TL;DR experiments with Pythia 1B and Qwen2.5 1.5B, we report mean $\pm$ standard deviation over 3 random seeds to account for variance from limited preference data.

\subsubsection{Evaluation metrics}\label{sec:metrics}

We report the win rate of policy outputs against reference responses, judged by the gold RM. We also measure proxy RM accuracy and KL divergence during policy optimization, measured at every 50 global optimization steps. Proxy RM accuracy is calculated as agreement between the proxy RM and the gold RM on pairwise ranking comparisons of on-policy completions.

\subsection{Hyperparameters}\label{app:hyperparameters}

Tables~\ref{tab:hp-sft}--\ref{tab:hp-policy} list all hyperparameters used in our experiments.

\begin{table}[h]
	\centering
	\small
	\begin{tabular}{lcccc}
		\toprule
		Hyperparameter              & Pythia 1B          & Pythia 2.8B        & Qwen2.5 1.5B       & Qwen2.5 3B         \\
		\midrule
		\multicolumn{5}{l}{\emph{SFT}} \\
		Epochs                      & 1                  & 1                  & 1                  & 1                  \\
		Learning rate               & $3\times10^{-6}$   & $3\times10^{-6}$   & $3\times10^{-6}$   & $3\times10^{-6}$   \\
		LR scheduler                & cosine             & cosine             & cosine             & cosine             \\
		Per-device batch size       & 16                 & 16                 & 16                 & 16                 \\
		bf16                        & \checkmark         & \checkmark         & \checkmark         & \checkmark         \\
		\midrule
		\multicolumn{5}{l}{\emph{Preference data generation}} \\
		Completions per prompt      & 2                  & 2                  & 2                  & 2                  \\
		Sampling temperature        & 1.0                & 1.0                & 1.0                & 1.0                \\
		Max new tokens              & 53                 & 53                 & 63                 & 63                 \\
		Gold RM                     & Skywork-8B         & GPT-4.1 Nano       & Skywork-8B          & GPT-4.1 Nano      \\
		\bottomrule
	\end{tabular}
	\caption{SFT and preference data generation hyperparameters.}
	\label{tab:hp-sft}
\end{table}

\begin{table}[h]
	\centering
	\small
	\begin{tabular}{lcc}
		\toprule
		Hyperparameter              & Pythia 1B                                 & Qwen2.5 1.5B     \\
		\midrule
		Preference data epochs      & \multicolumn{2}{c}{$\{1, 3, 5, 10, 20\}$}                    \\
		Max length                  & 512                                       & 512              \\
		Learning rate               & $2\times10^{-5}$                          & $2\times10^{-5}$ \\
		LR scheduler                & cosine                                    & cosine           \\
		Warmup ratio                & 0.03                                      & 0.03             \\
		Weight decay                & 0.1                                       & 0.1              \\
		Per-device batch size       & 16                                        & 16               \\
		Gradient accumulation steps & 1                                         & 1                \\
		Gradient checkpointing      & \checkmark                                & \checkmark       \\
		bf16                        & \checkmark                                & \checkmark       \\
		\bottomrule
	\end{tabular}
	\caption{Proxy reward model training hyperparameters. The same base model architecture as the SFT policy is used with a randomly initialized scalar head.}
	\label{tab:hp-rm}
\end{table}

\begin{table}[h]
	\centering
	\small
	\begin{tabular}{lcccc}
		\toprule
		Hyperparameter             & Pythia 1B                                               & Pythia 2.8B      & Qwen2.5 1.5B     & Qwen2.5 3B       \\
		\midrule
		Training steps             & 2{,}000                                                 & 2{,}000          & 500              & 500              \\
		Generations per prompt $K$ & 8                                                       & 8                & 8                & 8                \\
		Max new tokens             & 53                                                      & 53               & 63               & 63               \\
		Temperature                & 1.0                                                     & 1.0              & 1.0              & 1.0              \\
		Per-device batch size      & 2                                                       & 2                & 2                & 2                \\
		Gradient accumulation steps & 16                                                     & 16               & 16               & 16               \\
		Number of GPUs             & 8                                                       & 8                & 8                & 8                \\
		Learning rate              & $1\times10^{-6}$                                        & $1\times10^{-6}$ & $1\times10^{-6}$ & $1\times10^{-6}$ \\
		LR scheduler               & constant                                                & constant         & constant         & constant         \\
		Clip range ratio           & 0.2                                                     & 0.2              & 0.2              & 0.2              \\
		KL coefficient $\beta$     & 0.01                                                    & 0.005            & 0.1              & 0.01             \\
		\method{} quantile $q_t$   & \multicolumn{4}{c}{$\{0.1, 0.2, 0.25, 0.3, 0.4, 0.5\}$}                                                          \\
		bf16                       & \checkmark                                              & \checkmark       & \checkmark       & \checkmark       \\
		\bottomrule
	\end{tabular}
	\caption{Policy optimization hyperparameters (Dr.GRPO / \method{}). For Pythia 1B and Qwen2.5 1.5B, the KL coefficient is selected via sweep over $\{0, 0.001, 0.005, 0.01, 0.1, 0.2\}$ (c.f., Appendix~\ref{sec:baselines}). For the RM epoch ablation (Appendix~\ref{app:ablation}), the KL coefficient is re-tuned per epoch.}
	\label{tab:hp-policy}
\end{table}

\subsection{Other experimental details}\label{app:other-experimental-details}

\paragraph{Certified radius validation (Figure~\ref{fig:radius_validation}).}
We test whether the certified sign-preservation radius $\Delta_j$ \eqref{eq:certified-radius-param}, derived under a linear head perturbation assumption, predicts robustness under broader perturbation classes.
We use a Pythia 1B SFT policy to generate $K{=}8$ completions (53 tokens each, temperature 0.7) for 500 prompts from the TL;DR validation split. The Pythia 1B proxy RM scores each completion and provides last-layer hidden states $h_j \in \mathbb{R}^{2048}$. For the gold RM we use Skywork-Reward-Llama-3.1-8B-v0.2, the one used for RM training. Groups with reward standard deviation below $10^{-4}$ are filtered.
We consider three settings beyond the analytically tractable head-only case.
\emph{Whole-model perturbation}: for each parameter tensor~$p$, we add $\delta_p \sim \mathcal{N}(0, (\sigma \|p\|_F / \sqrt{\mathrm{numel}(p)})^2 I)$ with $\sigma=0.1$ and draw 10 samples.
\emph{Input embedding perturbation}: isotropic Gaussian noise $\delta \sim \mathcal{N}(0, \sigma_{\mathrm{in}}^2 I)$ is added to the embedding layer output before the transformer backbone, with $\sigma_{\mathrm{in}}=0.01$ and 10 samples.
\emph{Gold RM agreement}: we compare the advantage signs of the proxy and gold RMs without any perturbation.
Completions are binned into 10 quantile groups (deciles) by $\Delta_j$. For each bin we report the mean sign-preservation rate or gold agreement rate with 95\% confidence intervals.

\section{Additional results}\label{app:additional-results}

\subsection{Additional comparison with baselines}\label{app:additional-baselines}

\paragraph{Stronger KL penalty.}
Figure~\ref{fig:kl_ablation_appendix} sweeps the KL coefficient $\beta$ for Dr.GRPO on Pythia 1B, with the \method{} trajectory. Increasing $\beta$ does constrain the policy to lower KL regions, but this trades off exploration for safety.
\method{} reaches the highest peak gold reward at a moderate KL budget.
We argue that the KL penalty controls \emph{how far} the policy moves from the reference distribution, uniformly penalizing all deviations. On the other hand, \method{} controls \emph{which directions} the policy moves, selectively suppressing updates where the RM is unreliable.

\begin{figure}[h]
	\centering
	\includegraphics[width=0.48\textwidth]{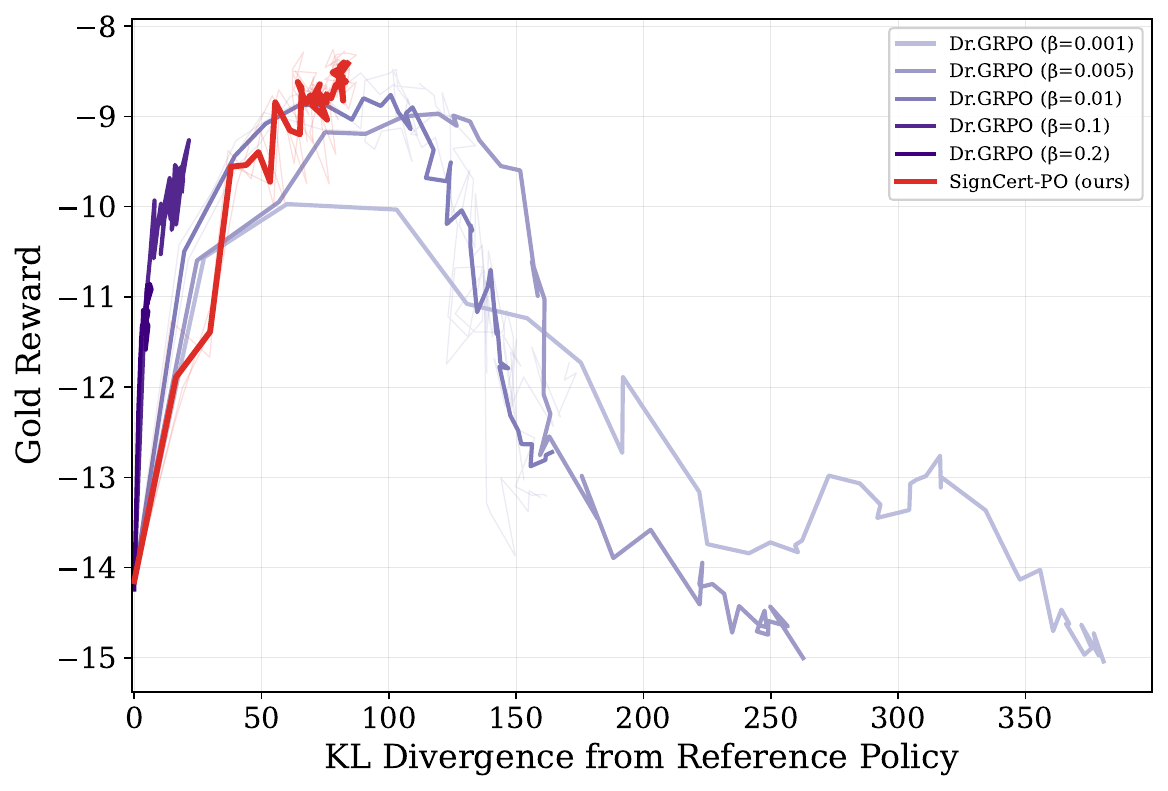}
	\caption{\textbf{Increasing $\beta$ constrains the policy to lower KL regions but trades off exploration for safety.} KL coefficient sweep for Dr.GRPO on Pythia 1B and TL;DR with the \method{} trajectory.}
	\label{fig:kl_ablation_appendix}
\end{figure}

\paragraph{UWO with same-size RMs.}
\begin{wraptable}{r}{0.38\textwidth}
	\vspace{-1.5em}
	\centering
	\small
	\begin{tabular}{lcc}
		\toprule
		Method           & RM size & Win rate        \\
		\midrule
		UWO ($\times 3$) & 410M$\times 3$    & $6.1 \pm 3.6$  \\
		UWO ($\times 3$) & 1B$\times 3$      & 5.9             \\
		\midrule
		\method{}        & 1B      & $\mathbf{60.0 \pm 2.0}$ \\
		\bottomrule
	\end{tabular}
	\caption{UWO RM size ablation on TL;DR (Pythia 1B).}
	\label{tab:uwo-rm-size}
	\vspace{-1em}
\end{wraptable}

In our experiments (Table~\ref{tab:main-results}), we followed \citet{zhang_mitigating_2024} and use smaller individual RMs for UWO to control total capacity (Appendix~\ref{sec:baselines}). As an ablation, we train an ensemble of three Pythia 1B RMs with different random seeds and run UWO under the same setting on TL;DR with the Pythia 1B policy. As shown in Table~\ref{tab:uwo-rm-size}, increasing the individual RM size does not improve UWO.

\subsection{Proxy RM accuracy during training}\label{app:rm-accuracy}

Figure~\ref{fig:rm-accuracy-1b-vs-2.8b} compares the proxy RM accuracy trajectory during policy optimization for Pythia 1B and 2.8B on TL;DR. Although their performance is comparable initially, the Pythia-1B proxy RM degrades substantially faster than the 2.8B proxy RM, confirming that smaller proxy RMs lose their accuracy more rapidly during policy optimization.

\begin{figure}[h]
	\centering
	\includegraphics[width=0.48\textwidth]{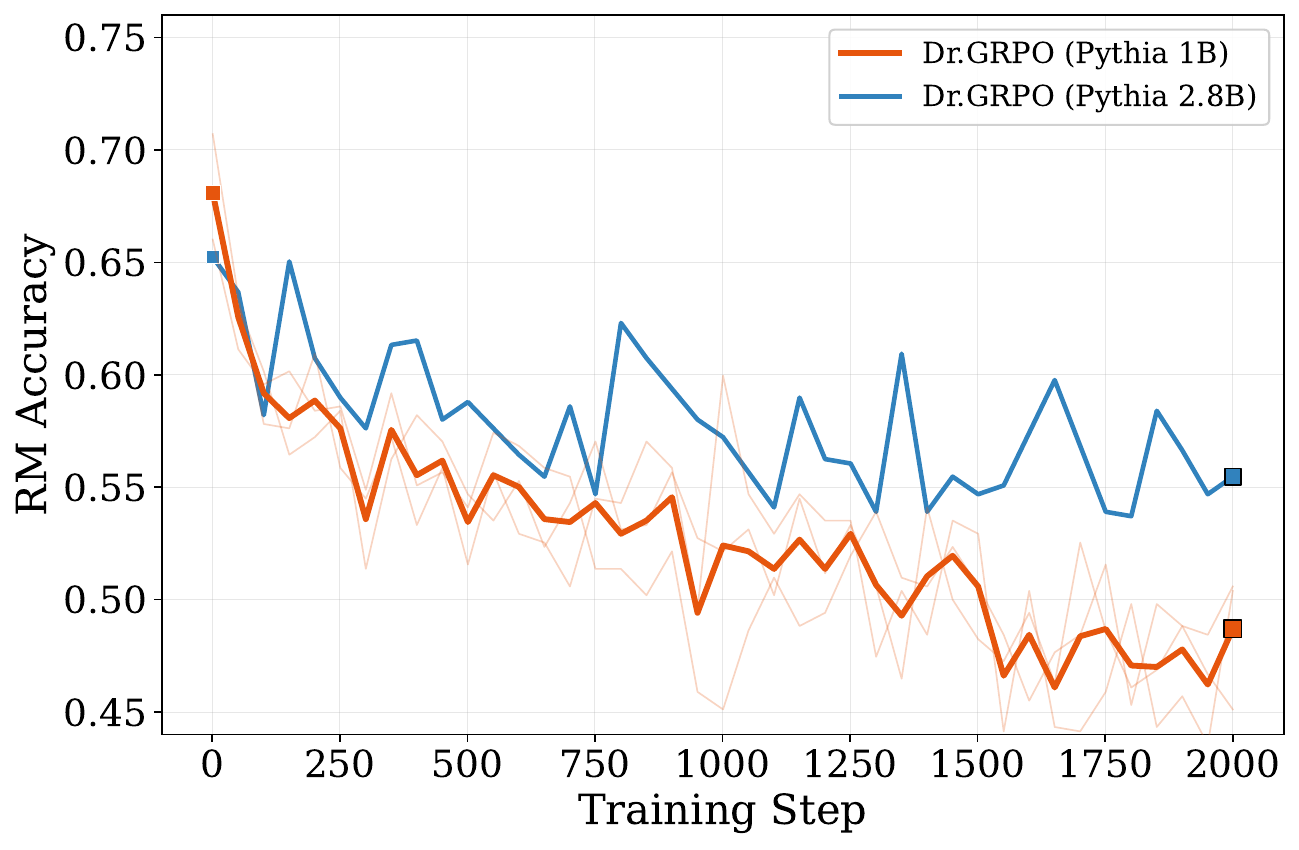}
	\caption{\textbf{Smaller proxy RMs lose accuracy faster during policy optimization.} Proxy RM accuracy over training steps for Pythia 1B and 2.8B on TL;DR. For Pythia 1B, mean over three seeds is shown with individual seed traces.}
	\label{fig:rm-accuracy-1b-vs-2.8b}
\end{figure}

\subsection{Ablation on quantile threshold and the amount of preference data}\label{app:ablation}

Figure~\ref{fig:kl-rm-quantile-ablation} shows the trade-off between KL divergence and RM accuracy across quantile hyperparameters for Pythia 1B on TL;DR. Higher quantile thresholds effectively constrain the policy to the regions where the proxy RM maintains higher accuracy. We note that \method{} consistently achieves higher RM accuracy than Dr.GRPO at the same KL budget, suggesting its distinction from the KL penalty parameter.

\begin{figure}[h]
	\centering
	\includegraphics[width=0.6\textwidth]{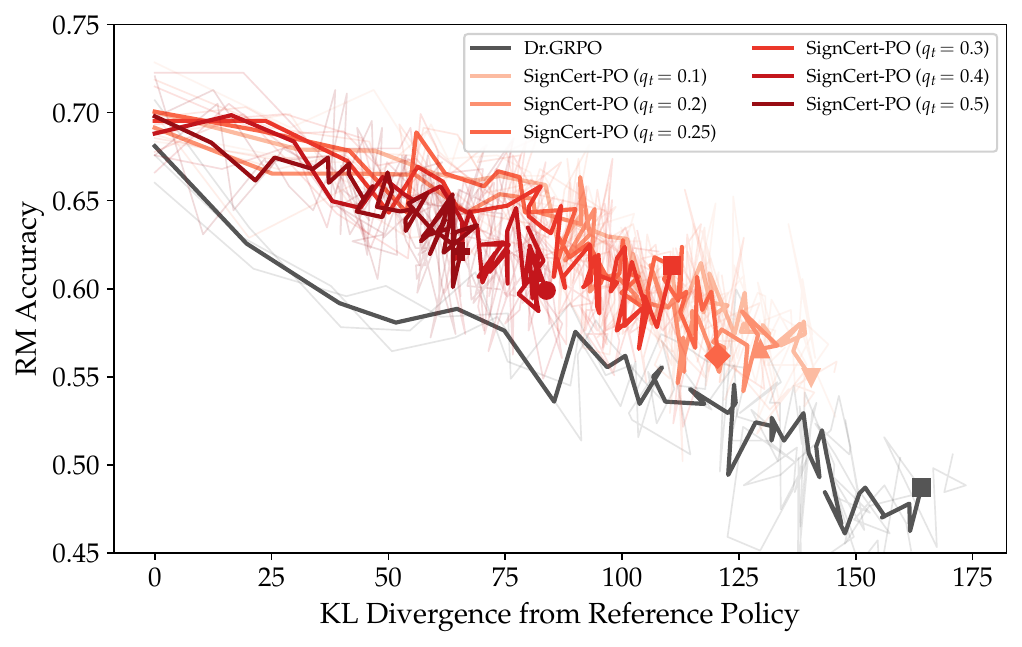}
	\caption{\textbf{The quantile parameter $q_t$ provides a tunable indicator of RM accuracy.} Proxy RM accuracy vs.\ KL divergence for Dr.GRPO and \method{} with varying $q_t$ on TL;DR (Pythia 1B). Thin lines show individual seeds and thick lines show the mean. Higher $q_t$ maintains RM accuracy.}
	\label{fig:kl-rm-quantile-ablation}
\end{figure}

Tables~\ref{tab:rm-accuracy-pythia}--\ref{tab:rm-accuracy-qwen} report the average proxy RM accuracy over training for varying preference-data epochs and quantile thresholds. \method{} consistently maintains higher proxy RM accuracy than Dr.GRPO across all settings, with the improvement most pronounced when the proxy RM is weak. We note that training a proxy RM with the preference data generated for 10--20 epochs can lead to overfitting, as we saw in \S\ref{sec:mechanism}. With approximately 123K preference pairs in the TL;DR dataset, 10 epochs corresponds to roughly 1.23M training samples, exceeding the 1--3 epochs typical in the literature~\citep{ouyang_training_2022, stiennon_learning_2020}. We include these settings as ablations.

\begin{table}[h]
	\centering
	\small
	\resizebox{\textwidth}{!}{%
		\begin{tabular}{lccccccc}
			\toprule
			RM Epochs & GRPO             & \method{} $q_t{=}0.1$ & $q_t{=}0.2$      & $q_t{=}0.25$     & $q_t{=}0.3$      & $q_t{=}0.4$      & $q_t{=}0.5$               \\
			\midrule
			1         & 52.89$\pm$0.22\% & 60.62$\pm$0.54\%      & 60.25$\pm$0.41\% & 62.04$\pm$0.20\% & 62.09$\pm$0.92\% & 62.80$\pm$0.54\% & \textbf{64.63$\pm$0.15\%} \\
			5         & 58.26\%          & 56.97\%               & 59.23\%          & 60.82\%          & 61.66\%          & 64.77\%          & \textbf{65.15\%}          \\
			10        & 57.32\%          & 58.14\%               & 60.47\%          & 62.20\%          & 61.08\%          & 64.93\%          & \textbf{65.32\%}          \\
			20        & 59.82\%          & 61.28\%               & 63.76\%          & 62.53\%          & 62.61\%          & \textbf{69.18\%} & 67.21\%                   \\
			\bottomrule
		\end{tabular}}
	\caption{Pythia-1B: average proxy RM accuracy (\%) over training. Higher is better.}
	\label{tab:rm-accuracy-pythia}
\end{table}

\begin{table}[h]
	\centering
	\resizebox{\textwidth}{!}{%
		\begin{tabular}{lccccccc}
			\toprule
			RM Epochs & GRPO             & \method{} $q_t{=}0.1$ & $q_t{=}0.2$      & $q_t{=}0.25$     & $q_t{=}0.3$      & $q_t{=}0.4$      & $q_t{=}0.5$               \\
			\midrule
			1         & 65.82$\pm$0.88\% & 67.09$\pm$0.78\%      & 68.78$\pm$0.80\% & 69.19$\pm$0.56\% & 69.32$\pm$0.33\% & 71.04$\pm$0.22\% & \textbf{71.72$\pm$0.70\%} \\
			5         & 66.94\%          & 67.05\%               & 68.84\%          & 68.55\%          & 68.25\%          & 70.54\%          & \textbf{73.28\%}          \\
			10        & 68.22\%          & 68.48\%               & 69.99\%          & 69.83\%          & 71.86\%          & 71.11\%          & \textbf{75.94\%}          \\
			20        & 69.51\%          & 67.95\%               & 69.30\%          & 71.22\%          & 71.32\%          & 71.96\%          & \textbf{74.31\%}          \\
			\bottomrule
		\end{tabular}}
	\caption{Qwen2.5-1.5B: average proxy RM accuracy (\%) over training. Higher is better.}
	\label{tab:rm-accuracy-qwen}
\end{table}

Tables~\ref{tab:quantile-winrate-pythia}--\ref{tab:quantile-winrate-qwen} present the full quantile sweep on TL;DR. The optimal quantile threshold depends on the proxy RM quality and policy architecture. Across both models, higher thresholds monotonically improve proxy RM accuracy (Tables~\ref{tab:rm-accuracy-pythia}--\ref{tab:rm-accuracy-qwen}) but can reduce win rate when the threshold is too aggressive, reflecting the exploration--exploitation trade-off in robust optimization.

\begin{table}[h]
	\centering
	\small
	\resizebox{\textwidth}{!}{%
		\begin{tabular}{lccccccc}
			\toprule
			RM Epochs & GRPO              & \method{} $q_t{=}0.1$ & $q_t{=}0.2$      & $q_t{=}0.25$     & $q_t{=}0.3$      & $q_t{=}0.4$               & $q_t{=}0.5$      \\
			\midrule
			1         & 20.96$\pm$4.38\%  & 20.31$\pm$6.40\%      & 30.08$\pm$7.16\% & 42.77$\pm$1.81\% & 49.81$\pm$8.98\% & \textbf{60.03$\pm$1.61\%} & 53.39$\pm$3.86\% \\
			5         & 63.28\%           & 50.98\%               & 52.54\%          & 67.58\%          & \textbf{67.97\%} & 66.41\%                   & 49.61\%          \\
			10        & 55.27\%           & 66.80\%               & 73.44\%          & 70.12\%          & 71.29\%          & \textbf{77.93\%}          & 67.38\%          \\
			20        & \textbf{64.06\%}  & 53.32\%               & 56.06\%          & 48.24\%          & 47.27\%          & 42.97\%                   & 49.22\%          \\
			\bottomrule
		\end{tabular}}
	\caption{Pythia-1B: win rate (\%) after 2,000 steps across quantile thresholds and RM epochs.}
	\label{tab:quantile-winrate-pythia}
\end{table}

\begin{table}[h]
	\centering
	\resizebox{\textwidth}{!}{%
		\begin{tabular}{lccccccc}
			\toprule
			RM Epochs & GRPO                      & \method{} $q_t{=}0.1$ & $q_t{=}0.2$      & $q_t{=}0.25$     & $q_t{=}0.3$      & $q_t{=}0.4$      & $q_t{=}0.5$      \\
			\midrule
			1         & \textbf{66.86$\pm$3.13\%} & \textbf{66.01$\pm$1.92\%}      & 65.11$\pm$1.27\% & 64.06$\pm$0.70\% & 63.74$\pm$1.07\% & 60.09$\pm$1.27\% & 53.58$\pm$1.30\% \\
			5         & 94.53\%                   & 94.92\%               & \textbf{95.90\%} & 95.51\%          & 92.58\%          & 94.34\%          & 90.43\%          \\
			10        & 94.34\%                   & \textbf{95.70\%}      & 94.92\%          & 95.51\%          & 94.53\%          & 90.43\%          & 91.21\%          \\
			20        & 95.31\%                   & 96.09\%               & 94.53\%          & 96.09\%          & 94.34\%          & \textbf{96.88\%} & 94.92\%          \\
			\bottomrule
		\end{tabular}}
	\caption{Qwen2.5-1.5B: win rate (\%) after 500 steps across quantile thresholds and RM epochs.}
	\label{tab:quantile-winrate-qwen}
\end{table}

\subsection{Sample outputs}\label{app:sample-outputs}

Tables~\ref{tab:sample-output}--\ref{tab:sample-output-alpaca} show example outputs by models trained with different methods. On TL;DR with Pythia-1B (1-epoch proxy RM), \method{} produced the response with the highest gold reward (GR) in 326 out of 512 evaluation prompts. On AlpacaFarm with Qwen2.5-1.5B, \method{} achieved the highest GR in 210 out of 512 prompts.

\begin{table}[h]
	\centering
	\small
	\begin{tabular}{p{0.88\textwidth}r}
		\toprule
		Content & GR \\
		\midrule
		SUBREDDIT: r/relationships \newline
		TITLE: Me [17 M] with a girl [16 F] who I don't know how to let down. \newline
		POST: There is a backstory behind this so let me give it to everyone straight. I'm 17 M never dated anyone before. I have made posts prior to this one based around my issues with being able to date. My biggest issue is that once a girl peeks my interest I pursue her until she develops the same feelings. Once I understand she is interested too I sort of back off. A friend of mine called me a ``chaser'' which is what I feel I am. Now I have this girl that I know, really sweet, really smart, talented, the whole thing. I found recently she has feelings for me and we talked a bit about it after I mentioned I was aware of them. When we talked I gave her the impression that I wanted to date her and that we should talk in person and hang out a big more to be dating. The only issue is my gut AGAIN like always is telling me I'm not interested even though I really want to be, I'm just not. & -- \\
		\midrule
		\textbf{Reference}: I don't know how to let this girl down after already giving her the impression I wanted to date her. I feel absolutely terrible about it. & $-0.44$ \\
		\midrule
		\textbf{Dr.GRPO}: talks/ possible dates/ feelings for girl/ situation changed but am I worried about telling/ situation affecting date/ how to let down girlfriend/ options available/ outcome. Advice please but worried about telling/ situation affecting date/ outcome. & $-8.56$ \\
		\textbf{AdvPO}: communication/dating situation, possibility of dating but am not sure if to let down friend/ possible date/ talk in person/ hang out/ possible outcome. Do I tell friend or risk hurting friendship? worried about outcome. worried about outcome & $-9.50$ \\
		\textbf{UWO}: Don't know how to let her down. Help. Help. Help help help. help help. help help. help. not sure how to best approach help help help. not sure how to best approach help. not sure how to tell. not sure how to & $-16.62$ \\
		\textbf{BSPO}: I'm 17 M never dated before, have feelings for girl, want to date her, but not sure if I'm interested or not. not sure how to let her down. any advice or suggestions would be appreciated. any thoughts or comments would be appreciated. any & $-8.75$ \\
		\textbf{\method{} (Ours)}: dating/potential relationship, girl has feelings for me, I give impression I want to date but gut tells me I'm not interested. Am I overthinking this? Advice on letting down? Am worried about hurting relationship? Help! & $-1.31$ \\
		\bottomrule
	\end{tabular}
	\caption{Example output by a Pythia-1B model trained with different methods on the TL;DR task. \method{} produces a coherent summary comparable to the reference. GR denotes gold reward.}
	\label{tab:sample-output}
\end{table}

\begin{table}[h]
	\centering
	\small
	\begin{tabular}{p{0.88\textwidth}r}
		\toprule
		Content & GR \\
		\midrule
		Generate a list of the seven most iconic locations in Colorado. & -- \\
		\midrule
		\textbf{Reference}: 1. Garden of the Gods -- Colorado Springs \newline 2. Maroon Bells -- Aspen \newline 3. Mesa Verde National Park -- Montezuma \newline 4. Royal Gorge Bridge -- Ca\~{n}on City \newline 5. Red Rocks Amphitheater -- Denver \newline 6. Pikes Peak -- Colorado Springs \newline 7. Rocky Mountain National Park -- Estes Park & $7.81$ \\
		\midrule
		\textbf{Dr.GRPO}: The seven most iconic locations in Colorado are: Rocky Mountain National Park, Denver, Colorado, Pikes Peak, Colorado, Colorado Springs, Colorado, and the Great Sand Dunes National Park. Explanation: These locations are iconic because they are known for their natural beauty, adventure, and history. They are also popular tourist destinations and have a lot of cultural and historical significance. They are also easily accessible and offer a variety of activities and experiences. Finally, they are all located in Colorado and are a great way to experience the state. & $-9.06$ \\
		\textbf{AdvPO}: The seven most iconic locations in Colorado are: Rocky Mountain National Park, Denver, Colorado Springs, Pikes Peak, Colorado Springs, Colorado Springs. These locations are all known for their natural beauty and are popular tourist destinations. They are also home to many famous landmarks and attractions. & $-20.88$ \\
		\textbf{UWO}: The seven most iconic locations in Colorado are: 1. Rocky Mountains 2. Grand Canyon 3. Mesa Verde 4. Garden of the Gods 5. Pikes Peak 6. Rocky Mountain National Park 7. Denver. These locations are iconic because they are beautiful, unique and have a rich history. They are also popular tourist destinations and have a significant impact on the state's economy. Therefore, they are considered to be the most iconic locations in Colorado. & $-16.25$ \\
		\textbf{BSPO}: The seven most iconic locations in Colorado are: Rocky Mountain National Park, Denver, Colorado, Denver International Airport, Denver, Colorado, Denver, Colorado, Denver, Colorado. & $-28.50$ \\
		\textbf{\method{} (Ours)}: 1. Rocky Mountains 2. Denver 3. Colorado Springs 4. Pikes Peak 5. Garden of the Gods 6. Mesa Verde 7. Telluride & $8.56$ \\
		\bottomrule
	\end{tabular}
	\caption{Example output by a Qwen2.5 1.5B model trained with different methods on the AlpacaFarm task. \method{} produces a concise list comparable to the reference.}
	\label{tab:sample-output-alpaca}
\end{table}

\end{document}